\newcommand{\cH}{\mathcal{H}}
\newcommand{\cU}{\mathcal{U}}
\newcommand{\cX}{\mathcal{X}}
\newcommand{\cD}{\mathcal{D}}
\newcommand{\E}{\mathbb{E}}
\newcommand{\close}{\epsilon}
\newcommand{\regular}{\alpha}
\newcommand{\poly}{\mathrm{poly}}
\newcommand\VC[1]{\mathrm{VCdim}(#1)}
\newtheorem{lemma}{Lemma}
\newtheorem{theorem}[lemma]{Theorem}
\newtheorem{corollary}[lemma]{Corollary}
\newtheorem{definition}[lemma]{Definition}
\newtheorem{claim}[lemma]{Claim}
\title{A General Memory-Bounded Learning Algorithm}
\date{}
\author{%
  Michal ~Moshkovitz
  \\
  Edmond and Lily Safra Center for Brain Sciences\\
  The Hebrew University\\
  Jerusalem 91904, Israel \\
  \texttt{michal.moshkovitz@mail.huji.ac.il}\\ 
   \and
   Naftali Tishby \\
  The Rachel and Selim Benin School of Computer Science and Engineering \\
  The Hebrew University\\
  Jerusalem 91904, Israel \\
   \texttt{tishby@cs.huji.ac.il} \\
}
\begin{document}

\maketitle

\begin{abstract}
Designing bounded-memory algorithms is becoming increasingly important nowadays. 
Previous works studying bounded-memory algorithms focused on proving impossibility results, while the design of bounded-memory algorithms was left relatively unexplored.
To remedy this situation, in this work we design a general bounded-memory learning algorithm, when the underlying distribution is known. 
The core idea of the algorithm is not to save the exact example received, but only a few important bits that give sufficient information. 
This algorithm applies to any hypothesis class that has an ``anti-mixing'' property.
This paper complements previous works on unlearnability with bounded memory and provides a step towards a full characterization of bounded-memory learning. 
\end{abstract}
\section{Introduction}
The design of learning algorithms that require a limited amount of memory is more crucial than ever as we are amidst the big data era. 
An enormous amount of new data is created worldwide every second \cite{InternetLiveStats}, while learning is often performed on low memory devices (e.g., mobile devices). 
To bridge this gap, low memory algorithms are desired.
Moreover, bounded-memory learning has connections to other fields 
as artificial and biological neural networks can be viewed as bounded-memory algorithms (see \cite{moshkovitz17}).
Prior to this paper there were many works that provided lower bounds for bounded-memory learning \cite{shamir14, steinhardt16, raz16, kol17, moshkovitz17a, moshkovitz17b, raz17, garg17, beame17}. 
  Specifically, \cite{moshkovitz17b}  defined a ``mixing'' property for classes which is a combinatorial condition that if satisfied the class cannot be learned with bounded-memory. 
  There are also a few upper bounds, but for specific classes (e.g., 
\cite{rosenblatt57}).

 In this paper we define a combinatorial property, \emph{separability}, which is basically ``anti-mixing''.
 We prove that if a class is separable, then it can be properly\footnote{A proper learner for a class $\cH$ always returns hypothesis $h\in\cH$ in the class.} learned under a known 
 distribution with a bounded-memory algorithm.
This property use the same terms as mixing (more details appear in Section~\ref{sec:related_work}), thus, it is an important step towards bounded-memory characterization.
We also provide a general bounded memory algorithm, and prove that this algorithm function correctly for any separable class.
Interestingly, we show that this algorithm can be implemented also in the statistical query model of \cite{kearns98}, and hence is robust to noise. 
Finally, we exemplify the algorithm on several natural classes 
 and show an implementation that is both time and memory efficient.





\subsection{The General Algorithm: a few Examples}
To show the generality of the algorithm proposed in this paper we show how to apply it to several natural classes. 
These classes have varied structures: different dimension and  different VC-dimension values. Nevertheless, we prove that all these classes satisfy one unified general condition, separability.
We also present efficient implementations of the algorithm both in time and space for these classes. 

\textbf{Decision Lists.} 
A decision list is a function defined over $n$ Boolean inputs of the following form:
\[\textbf{if } \ell_1 \textbf{ then } b_1 \textbf{ else if } \ell_2 \textbf{ then } \ldots \textbf{if }\ell_k \textbf{ then } b_k \textbf{ else } b_{k+1},\]
where $\ell_1,\ldots,\ell_k$ are literals over the $n$ Boolean variables and $b_1,\ldots,b_{k+1}$ are bits in $\{0,1\}.$
This class was introduced by 
\cite{rivest87} and it has interesting relationships with important classes such as threshold functions,  2-monotonic functions, read-once
functions, and more \cite{eiter02}.
There are several works on learning decision lists \cite{nevo02, dhagat94, klivans06}, however those works learn this class under several assumptions. Ignoring those assumptions leaves their algorithm with a non-polynomial number of examples, which is  a minimal requirement  for any learning algorithm.

\textbf{Equal-Piece Classifiers.} 
The domain is a discretization of the segment $[0,1]$. Each hypothesis corresponds to a few disjoint segments of size $p$ and an example gets the value $1$ if it is inside one of the segments and $0$ otherwise.
This class is an exemplar class for weak-learnability \cite{shalev14}.


\textbf{Discrete Threshold Functions.} 
As a sanity check, we also apply the algorithm to the discrete threshold functions, where the domain is the discretization of the segment $[0,1]$ and each hypothesis corresponds to a number $\theta\in[0,1]$. An example is labeled by $1$ if it is smaller than $\theta$ and $0$ otherwise. 
There is a simple learning algorithm for this class: save in memory the largest example with label $1.$ 
We show that indeed this class can be learned using our general algorithm. 
This class is similar to equal-piece classifier with $p=1.$

\subsection{Intuition for the General Algorithm}
The general bounded-memory algorithm saves in memory a subset of hypotheses $T\subseteq\cH$ that contains the correct hypothesis $f$, with high probability.
At each iteration, it draws a few examples and then removes hypotheses from $T$. 
Crucially, if $\cH$ is separable, the algorithm removes a \emph{large} fraction of hypotheses from $T$, and this is why it stops after a small number of iterations. 
Importantly, the removal is done while saving only a few bits of memory. 
This is done using ideas from graph theory, inspired by \cite{moshkovitz17b}, as explained next.


A hypothesis class $\cH=\{h:\cX\rightarrow\{0,1\}\}$ over domain $\cX$ can be represented as a bipartite graph in the following way. 
The vertices are the examples $\cX$ and the hypotheses $\cH$, and the edges connect every example $x\in\cX$ to a hypothesis $h\in \cH$ if and only if $h(x)=1.$ 
The density $d(S,T)$ between two subset of vertices $S$ and $T$ is the fraction of edges between them.

The key idea of the general learning algorithm is to estimate the density $d(S,f)$, where $f$ is the correct hypothesis and $S\subseteq\cX$ is a set of examples with heavy weight according to the known distribution.
This density can be estimated with a few bits since it can be written as an expectation over examples in $S$, 
and most of the received examples  are from $S$ as it is heavy-weight. 
Using this estimation, the algorithm rules out from $T$ any hypothesis $h\in T$ with $d(S,h) \not\approx d(S,f)$.
The separability property ensures that at each step the algorithm rules out many hypotheses.  

\subsection{Informal Summary of our Results}
The results are informally summarized  below.
\begin{enumerate}
    \item We introduce the combinatorial condition of \emph{separability} for hypothesis classes, which is closely related to anti-mixing (the connection between the two definitions is discussed in Section~\ref{sec:related_work}). 
    \item We present a general memory-bounded proper learning algorithm in the case where the examples are sampled from a known distribution. 
    We prove the correctness of this algorithm in the case where the classes satisfy the separability condition. 
    \item We prove that the general algorithm is also a statistical query algorithm.
    \item We exemplify our algorithm on several natural algorithms: decision lists, equal-piece classifiers, and discrete threshold functions.
\\
\end{enumerate}

\subsection{Paper Outline} 
Related work is discussed in Section~\ref{sec:related_work}.
 In Section~\ref{sec:algorithms} we present the general bounded-memory algorithm and prove that it is also a statistical query algorithm. 
 In Section~\ref{sec:definitions} we formally present the notion of separability. 
 In Section~\ref{sec:applications} we show that this algorithm can be used to properly learn the three classes presented above with bounded memory.  
 In this section we also present the time and memory efficient algorithm for the class of decision lists.  
 

\section{Related Work}\label{sec:related_work}
The fundamental theorem of statistical learning gives an exact combinatorial characterization of learning classification problems \cite{shalev14}. 
The theorem also provides an inefficient learning rule, empirical risk minimization, that can be used to learn any learnable class. 
This paper is a step towards a ``fundamental theorem of \emph{bounded-memory} learning'' as it (i) gives a combinatorial condition, separability, for bounded-memory learnability that is similar to the mixing condition for unlearnability with bounded-memory (ii) suggests a general algorithm for bounded-memory learning any separable class. 

Many works \cite{shamir14, steinhardt16, raz16, kol17, moshkovitz17a, moshkovitz17b, raz17, garg17, beame17} have discussed the limitations of bounded-memory learning. 
The work \cite{moshkovitz17b}  defined a ``mixing'' property for classes, that if satisfied the class cannot be learned with bounded-memory. 
 Colloquially, mixing states that \emph{for any} subset of hypotheses $T$ and \emph{for any} subset of examples $S$, the number of edges between $S$ and $T$, $E(S,T)$, is as expected (about $p|S||T|$, where $p$ is the density of the graph). On the other hand, anti-mixing states that \emph{for any} $T$, \emph{there is} $S$ such that $E(S,T)$ is far from what we expect. For comparison, the negation of mixing means that \emph{there is} $T$ and \emph{there is} $S$ such that $E(S,T)$ is far from what we except. Thus, the negation of mixing and anti-mixing are very similar definitions but not exactly the same. 
It is an important open problem to provide a full characterization of bounded-memory learning. 



The statistical query (SQ) model  was introduced by 
\cite{kearns98} to  provide a general framework for learning in the presence of classification noise. 
We prove that our algorithm works in the statistical query model of Kearns and is thus robust to classification noise.
%
A characterization of learnability with statistical queries was first given by \cite{blum94} where the SQ-dimension was introduced. 
This is an \emph{exact} measure for weak learnability since sq-dim$(\cH)=d$ if and only if the class $\cH$ can be learned (weakly) with $poly(d)$ statistical queries. Given the connections between statistical queries and bounded-memory algorithms \cite{steinhardt16, feldman16} one might hope that sq-dim fully captures bounded memory too. 
This, however, is not known to be true. 
This is why other approaches to bounded-memory characterization are needed. 
This paper provides such a promising approach as it uses the same terms as mixing and has a striking similarity to non-mixing. 

%
%

On the surface it seems that memory-bounded learning algorithms can be obtained from algorithms that compress the labeled examples to fit in a small space (Occam's Razor paradigm by \cite{blumer1987occam}, sample compression learning algorithms \cite{littlestone86,floyd89}). However, this is not the case, since compression algorithms work in an offline model in which all the labeled examples are stored, but their storage does not count towards the memory usage of the learning algorithm. In the bounded-memory model, on the other hand, the examples are received in an online fashion and storing them counts against the memory bound of the algorithm.

\section{A General Bounded Memory Algorithm}\label{sec:algorithms}
Let $\cH$ be hypothesis class over a domain set $\cX$.
A learning algorithm receives, in an on-line fashion, a series of labeled examples $(x,f(x))$, where $f$ is the true hypothesis.
The goal of the algorithm is to return a hypothesis $h\in\cH$ that is close to $f.$ 
Any bounded-memory algorithm can be described using a branching program (see Figure~\ref{fig:branching_program}), which is a layered graph. Each layer corresponds to a time step, i.e., number of examples received so far. Each layer contains all the possible memory states. If the algorithm uses $b$ bits, then there are $2^b$ memory states.  
One can learn any class $\cH$ over domain $\cX$ with $O(\log|\cH|)$ examples and $O(\log|\cX|\log|\cH|)$ memory bits. 
 Thus, a learning algorithm that uses much less memory bits, $o(\log|\cX|\cdot\log|\cH|)$ bits, is called a \emph{bounded-memory} algorithm.

One of the main contribution of this work is designing a general learning algorithm that uses a bounded amount of bits. 
This 
algorithm saves in memory a subset of hypotheses $T$ that, with high probability, contains the correct hypothesis $f$. 
At the beginning of the algorithm, $T$ contains all the hypotheses in $\cH$. 
At each step the algorithm reduces a large fraction of $T$ while only using a few bits. 
The algorithm acts differently depending on whether $T$ contains a large subset of hypotheses that are close to each other or not. 
We call the former case \emph{tightness}, as formalized next. 

Two hypotheses $h_1,h_2\in\cH$ are \emph{$\close$-close} if $\Pr_{x\sim \cD}(h_1(x)\neq h_2(x))\leq\epsilon,$ where $\cD$ is the known distribution over the examples\footnote{For ease of presentation, in the rest of the paper we focus on the case that $\cD=\cU$ is the uniform distribution.}.
An \emph{$\close$-ball} with center $h\in \cH$ is the set $B_h(\close) = \{h'\in \cH \,\vert\, h' \text{ and } h \text{ are } \close\text{-close}\}.$ 
A subset $T\subseteq\cH$ is $(\regular,\close)$-\emph{tight} if there is a hypothesis $h$ with $|T\cap B_h(\close)|\geq \regular|T|$.
Note that $\regular$ and $\close$ are related as large $\close$ implies that $\regular$ is also large.
At each step, the algorithm distinguishes between the cases that $T$ is tight and $T$ is not tight and handle each case separately.
  
\textbf{$T$ is $(\regular,\close)$-\emph{tight}:} in this case there is a hypothesis $h$ with $|T\cap B_h(\close)|\geq \regular|T|$. 
The algorithm tests if $h$ is $\epsilon$-close to the correct hypothesis $f$. 
This is done using a few random examples, as described in Algorithm~\ref{alg:is_close}.  
If $h$ is $\close$-close, then the algorithm halts. Otherwise, the algorithm can safely delete $B_h(\close)$ from $T$, i.e., 
in this case the algorithm can reduce many, $\regular|T|$, hypotheses from $T$.  

\textbf{$T$ is not $(\regular,\close)$-\emph{tight}:} for this case we use ideas from graph theory. 
A hypothesis class $\cH$ over domain $\cX$ can be represented as a bipartite-graph in the following way. 
The vertices are the hypotheses $\cH$ and the examples $\cX$, and the edges connect every hypothesis $h\in \cH$ to an example $x\in\cX$ if and only if $h(x)=1.$ 
We call the appropriate bipartite graph the \emph{hypotheses graph} of $\cH.$
For any graph $(A,B,E)$, the \emph{density} between sets of vertices $S \subseteq A$ and $T \subseteq B$ is $d(S,T)=\frac{e(S,T)}{|S||T|}$, where $e(S,T)$ is the number of edges with a vertex in $S$ and a vertex in $T$. In case that $T$ contains only one vertex $T=\{v\}$, we simply write $d(S,v)$. 

For any heavy-weight $S$ (i.e., $|S|\geq\alpha|\cX|$ under the uniform distribution) the density $d(S,f)$ between $S$ and the correct hypothesis $f$ can be easily estimated without saving many bits in memory (see Algorithm~\ref{alg:estimate_density}). 
Hence, the algorithm can rule out from $T$ any hypotheses $h\in T$ with $d(S,h) \not\approx d(S,f)$. 
In cases where there are two large disjoint subsets $T_0, T_1\subseteq T$, $|T_0|,|T_1|\geq\regular|T|$ with 
\begin{eqnarray}\label{eq:local_separability}
\max_{h_0\in T_0}d(S,h_0) + \Omega(\alpha)\leq \min_{h_1\in T_1}d(S,h_1),
\end{eqnarray}
 either $T_0$ or $T_1$ can be ruled out from $T.$ 
 So the algorithm is able to delete $\regular|T|$ hypotheses from $|T|$ while only using a small number of memory bits. 
For classes that satisfy the separability property, as will be formalized in the next section, Equation~${(\ref{eq:local_separability})}$ holds. 
In Section~\ref{sec:applications} we also show examples of classes (e.g., decision lists)  that satisfy the separability property.



The algorithm uses an oracle~\footnote{There is such an oracle for any separable class, see Section~\ref{sec:definitions}. For the entire algorithm to be bounded memory, it is assumed that the oracle is also bounded memory, which is true for all the classes presented in this paper.} that provides the following functionality for any subset of hypotheses $T\subseteq \cH$: 
\begin{itemize}
\item If $T$ is  $(\regular,\close)$-tight, the oracle provides a proof for this by returning $h\in\cH$ with $|T\cap B_h(\close)|\geq \regular|T|.$
\item 
If $T$ is not $(\regular,\close)$-tight, the oracle returns  $S\subseteq\cX, T_1, T_0\subseteq T, d_0, d_1\in\mathbb{R}$ with $|S|\geq \regular|\cX|$, $|T_0|,|T_1| \ge \poly(\regular)\cdot |T|$ and $d_1-d_0 \geq \Omega(\regular)\cdot|S|$ such that $h\in T_0$ implies $e(h,S)\leq d_0$ and $h \in T_1$ implies $e(h,S)\geq d_1$.
\end{itemize}

 In Section~\ref{sec:applications} we show how to \emph{efficiently}, both in time and in memory, implement this oracle for specific classes. 
 The algorithm also uses the following subroutines:
(i) \textbf{Is-close$(h,\close,k)$} --- tests whether $h$ is $\close$-close to the correct hypothesis with error exponentially small in $k$. See Algorithm~\ref{alg:is_close}. 
(ii) \textbf{Estimate$(S,\tau,k)$} --- estimates $d(S,f)$ up to an additive error of $\tau$ with error exponentially small in $k$. 
    See Algorithm~\ref{alg:estimate_density}. 
The correctness of the subroutines is stated and proved in the Appendix. 

{\small
{
\begin{minipage}[t]{7cm}
  \vspace{0pt}  
    {
  \begin{algorithm}[H]  \caption{Is-close$(h, \close, k)$}
    \begin{algorithmic}[1]
    \STATE \textbf{Inputs: } $h\in\cH$, $\close>0$ 
\STATE \textbf{Parameter: } integer $k$
\STATE \textbf{Returns: }\\ True if $h$ is $\close$-close to $f$ \\ False if $h$ is not $3\close$-close to $f$
\STATE $j = 0$
\FOR {$i:=1$ to $k$}
\STATE get labeled example $(x,y)$
\IF {$h(x)\neq y$}
\STATE $j += 1$ 
\ENDIF
\ENDFOR
\RETURN  $j/k \leq 2\close$
\end{algorithmic}
\label{alg:is_close}
  \end{algorithm}
  
  }
\end{minipage}%
  \hspace{1pt} 
\begin{minipage}[t]{7cm}
  \vspace{0pt}

  {

  \begin{algorithm}[H]     
  \begin{algorithmic}[1]\label{alg:estimate_density}
  \caption{Estimate$(S,\tau,k)$}
\STATE \textbf{Inputs: }  $S\subseteq\cX$,$\tau>0$
\STATE \textbf{Parameter: } integer $k$
\STATE \textbf{Output: } $d(S,f)\pm \tau$
\FOR{ $i:=1$ \TO $2k/\tau$  }
\STATE get labeled example $(x,y)$ 
\IF {$x\in S$}
\STATE $counter_S := counter_S +1$ 
\IF { $y=1$}
\STATE $counter_1 := counter_1 +1$ 
\ENDIF
\ENDIF
\ENDFOR
\RETURN $counter_1/counter_S$
\end{algorithmic}
  \end{algorithm}  
 
  }
\end{minipage}
}
}
  \vspace{1pt}

The general bounded-memory algorithm is described in Algorithm~\ref{alg:BMRM} and a graphical representation of it as a branching program appears in Figure~\ref{fig:branching_program_general_algorithm}.
 The algorithm proceeds as follows.
At each step we maintain a set $T\subseteq\cH$ of candidates to be the correct hypothesis. 
In Line~\ref{alg:BMRM:init} we initialize $T$ to be $\cH,$ the entire hypothesis class. 
In Line~\ref{alg:BMRM:oracle_call} the algorithm calls the oracle and if $T$ is $(\regular,\close)$-tight, in Line~\ref{alg:BMRM:is_close} it tests whether the hypothesis $h$ returned by the oracle is $\epsilon$-close to the correct hypothesis.
If $h$ is close enough to the correct hypothesis, the algorithm halts and returns $h,$ otherwise the algorithm can safely remove all the hypotheses that are $\close$-close to $h$, as done in Line~\ref{alg:BMRM:delete_close_to_wrong}. 
If $T$ is not $(\regular,\close)$-tight, then in Line~\ref{alg:BMRM:oracle_call2} the algorithm calls the oracle to find $S, T_1, T_0, d_0, d_1$. 
In Line~\ref{alg:BMRM:estimate} the algorithm estimates $d(S,f)$, where $f$ is the correct hypothesis, and tests whether it is closer to $d_0$ or $d_1.$  Then, it is able to delete $T_0$ or $T_1$ according to the value of $r$ in Lines~\ref{alg:BMRM:delete_T0}, \ref{alg:BMRM:delete_T1}.

{\small 
  \vspace{2pt}
  {

  \begin{algorithm}[H]       \caption{General Bounded Memory Algorithm}
  \begin{algorithmic}[1]\label{alg:BMRM}
\STATE \textbf{Input:} class $\cH$
\STATE \textbf{Parameter: } integer $k$
\STATE $T :=\cH$ \label{alg:BMRM:init}
\LOOP {}
\IF {oracle returns $T$ is $(\regular,\close)$-tight with proof $h$ (i.e., $|T\cap B_h(\close)|\geq \regular|T|$)} \label{alg:BMRM:oracle_call}
\IF {Is-close$(h,\close,k)$} \label{alg:BMRM:is_close}
\RETURN $h$
\ELSE 
\STATE $T := T\setminus B_h(\close)$ \label{alg:BMRM:delete_close_to_wrong}
\ENDIF
\ELSE
\STATE \label{alg:BMRM:oracle_call2} oracle finds $S, T_1, T_0, d_0, d_1$ as stated earlier 
\STATE $r:=\text{Estimate}(S,(d_1-d_0)/2,k)$ \label{alg:BMRM:estimate}
\IF {$r|S| > \frac{d_1+d_0}{2}$}
\STATE \label{alg:BMRM:delete_T0} $T:=T\setminus T_0$
\ELSE
\STATE $T:=T\setminus T_1$ \label{alg:BMRM:delete_T1}
\ENDIF
\ENDIF
\ENDLOOP
\end{algorithmic}
  \end{algorithm}
  }
  }
    \vspace{2pt}

In the next section we introduce the separability property, which uses similar terms as in mixing.  
For classes that satisfy this property the oracle can be implemented and therefore the algorithm functions correctly. 
We prove that several natural classes satisfy this property, for example, the class of decision lists. Thus, it can be learned with bounded memory.  
An interesting feature of the general bounded-memory algorithm is that it is also a statistical query algorithm, as we describe next, and thus robust to random noise.

\begin{figure}
\centering
\begin{subfigure}[b]{.5\textwidth}
  \centering
  \includegraphics[width=.7\linewidth]{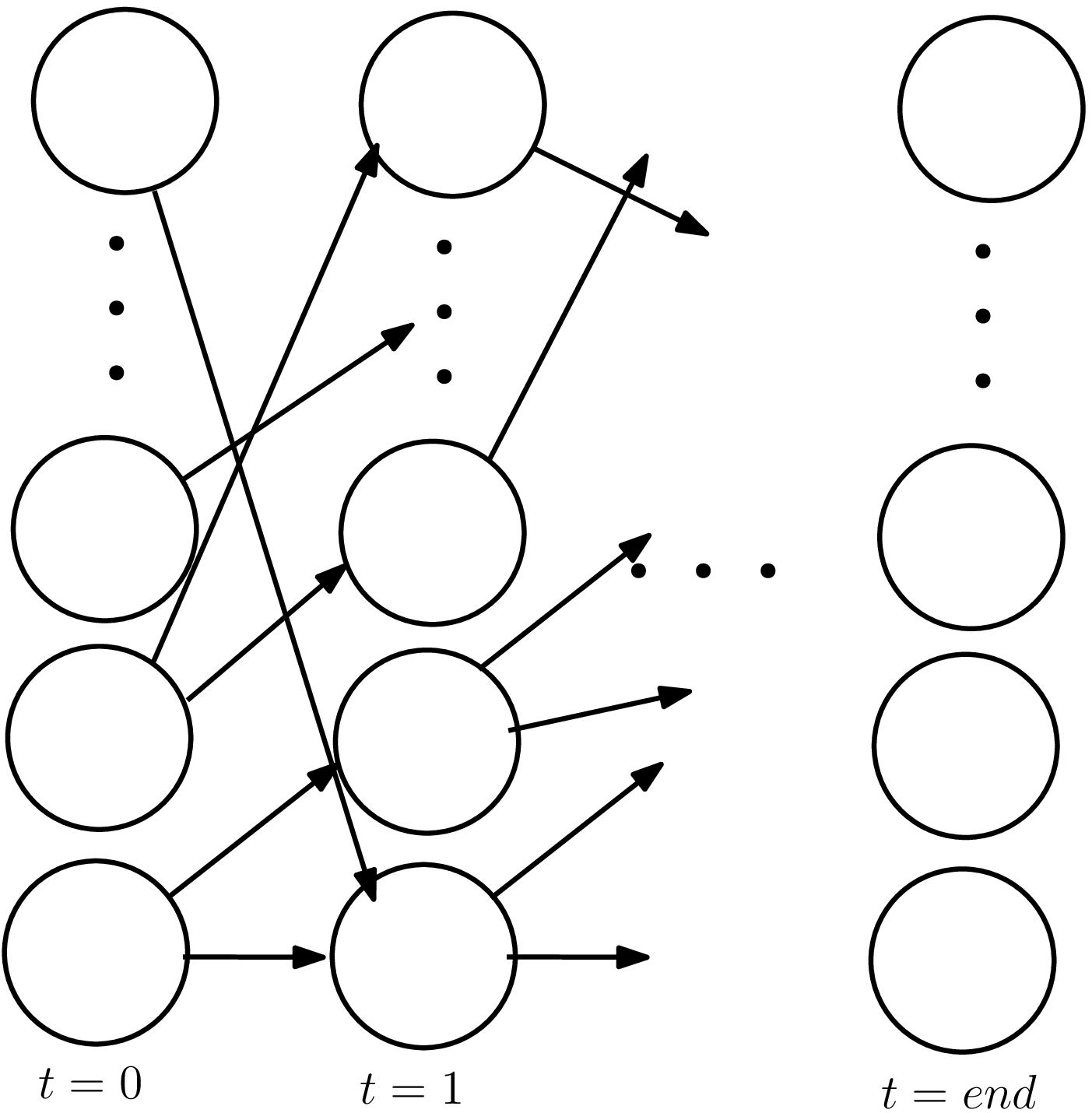}
  \caption{A bounded-memory algorithm}
  \label{fig:branching_program}
\end{subfigure}%
\begin{subfigure}[b]{.5\textwidth}
  \centering
  \includegraphics[width=.7\linewidth]{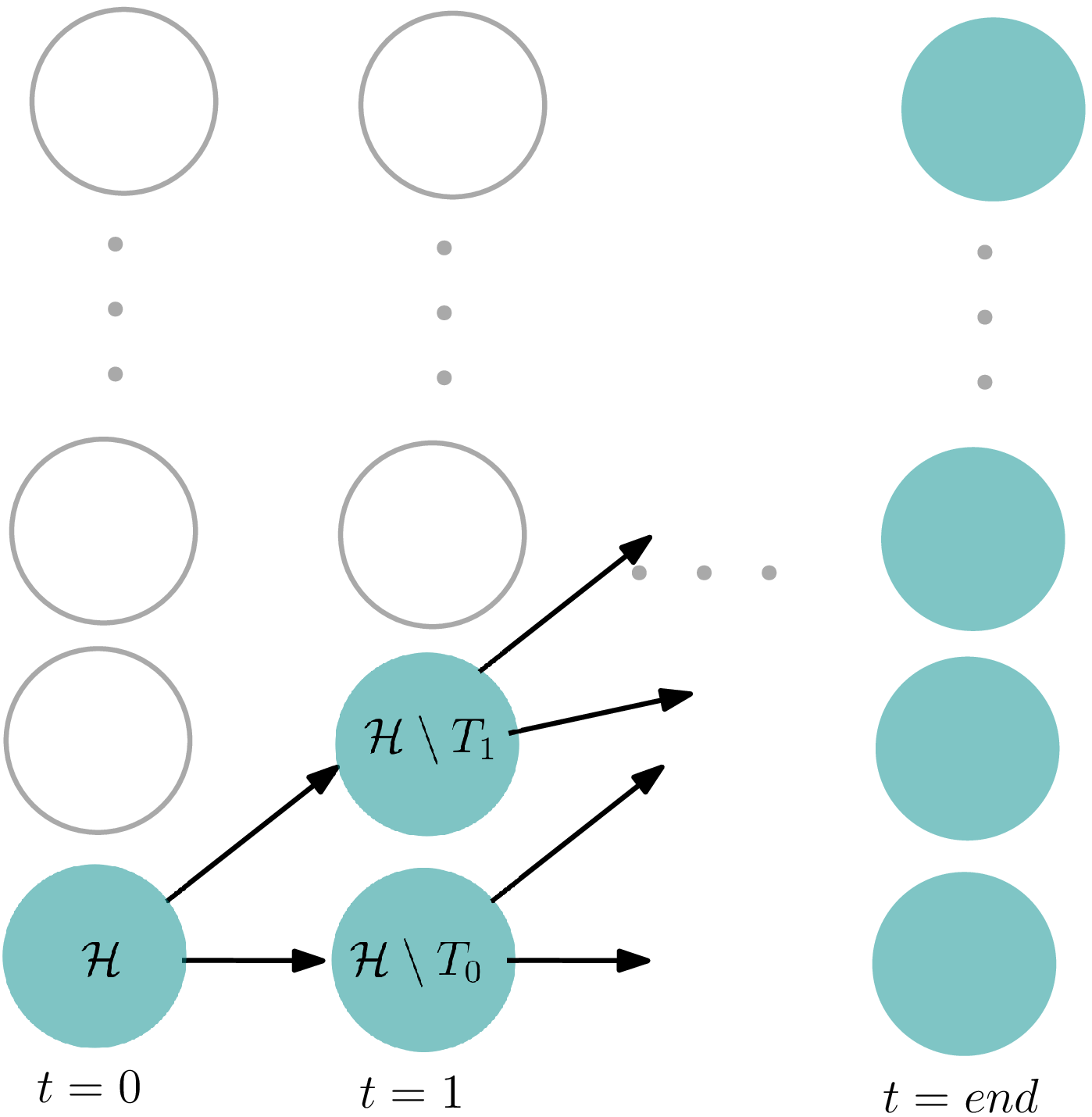}
  \caption{The general bounded-memory algorithm}
  \label{fig:branching_program_general_algorithm}
\end{subfigure}
\caption{(a) Graphical representation of a bounded memory learning algorithm as a branching program (b) Graphical representation of the general bounded memory algorithm. Cyan: memory states that can be reached. From each memory state, there are exactly two other different memory states that can be reached}
\label{fig:test}
\end{figure}

\subsection*{Statistical Queries}
The statistical queries (SQ) framework was introduced by Kearns (see \cite{kearns98}) as a way of designing learning algorithms that are robust to random classification noise. 
A statistical query algorithm access the labeled examples only through 
 \begin{inparaenum}[1)]
\item queries of the form $\psi:\cX\times\{0,1\}\rightarrow\{0,1\}$ and
\item answers $\E_{(x,y)}[\psi(x,y)]$ with an additive error $\tau$.
\end{inparaenum}
Algorithm~\ref{alg:BMRM} access the labeled examples only in the subroutines \emph{Is-close} and \emph{Estimate} in Lines~\ref{alg:BMRM:is_close} and \ref{alg:BMRM:estimate}. 
We prove in the Appendix that 
 indeed these subroutines can be easily implemented in the statistical queries framework. In fact, each subroutine can be implemented using just one statistical query. Thus, the general bounded memory algorithm (Algorithm~\ref{alg:BMRM}) is robust to noise. 

\section{Separable Classes}\label{sec:definitions}
In this section we formally define the separability property as a combinatorial condition of a bipartite graph. 
This property is closely related to the mixing property defined in \cite{moshkovitz17b}.
Recall that hypothesis class can be viewed as a bipartite graph using the hypotheses graph. 
We first define closeness and tight in the language of graph theory. 

In what follow we fix a bipartite graph $(A,B,E)$.
Two vertices $h_1,h_2\in A$ are \emph{$\close$-close} if $|N(h_1) \triangle N(h_2)| \le \close|B|$, where $N(h)$ denotes the set of neighbors of vertex $h$ and $ \triangle$ denotes the symmetric difference.
Similar to the previous section we define the \emph{$\close$-ball} with center $h\in A$ as the set
$B_h(\close) = \{h'\in A \,\vert\, h' \text{ and } h \text{ are } \close\text{-close}\}.$ 
A subset $T\subseteq A$ is $(\regular,\close)$-\emph{tight} if there is a hypothesis $h$ with $|T\cap B_h(\close)|\geq \regular|T|$.

In the previous section we used Equation~$(\ref{eq:local_separability})$, which is a local requirement on each hypothesis. 
We relax this requirement and instead require a weaker global requirement, as stated in the following definition.
\begin{definition}[$(\regular,\close)$-separability]\label{dfn:strongly-non-mixing}
We say that a bipartite graph $(A,B,E)$ is \emph{$(\regular,\close)$-separable} if for any $T\subseteq A$ that is not $(\regular,\close)$-tight there are subsets $S\subseteq B$ and $T_0,T_1\subseteq T$ with $T_0\cap T_1=\emptyset$, $|S|\geq \regular|B|, |T_0|\geq \regular|T|, |T_1|\geq \regular|T|$  such that $\left|d(S,T_0)-d(S,T_1)\right|\geq\regular.$
\end{definition}
A hypothesis class is \emph{$(\regular,\epsilon)$-separable} if its hypotheses graph is $(\regular,\close)$-separable.
Perhaps surprisingly, the global requirement of separability implies the local requirement, as stated in the following claim (see the Appendix for the proof). 
%


\begin{claim}\label{clm-differentiator-then-vertex}
Let $(A,B,E)$ be a bipartite graph. For any $T\subseteq A$ that is $\regular$-separable there are $S\subseteq B$ with $|S|\geq \regular|B|$, $T_0,T_1 \subseteq T$ with $|T_0|,|T_1| \ge \frac12\regular^2|T|$ and $d_0,d_1 \in \mathbb{R}$ with $d_1-d_0 \geq\frac{\regular}{4}|S|$ such that $h\in T_0$ implies $e(h,S)\leq d_0$ and $h \in T_1$ implies $e(h,S)\geq d_1$.
\end{claim}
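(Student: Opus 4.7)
The plan is first to invoke the separability hypothesis on $T$: there exist $S\subseteq B$ and disjoint $T_0^*, T_1^*\subseteq T$ with $|S|\geq \alpha|B|$, $|T_i^*|\geq \alpha|T|$, and $|d(S,T_0^*)-d(S,T_1^*)|\geq \alpha$. Relabeling if necessary, assume $d(S,T_1^*)-d(S,T_0^*)\geq \alpha$. Set
\[
m_i \;:=\; |S|\cdot d(S,T_i^*) \;=\; \E_{h\in T_i^*}\bigl[e(h,S)\bigr],
\]
so that the density gap becomes an \emph{average} gap $m_1-m_0\geq \alpha|S|$ in the quantities $e(h,S)\in [0,|S|]$. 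The remaining work is to refine $T_0^*$ and $T_1^*$ into smaller subsets on which this average gap becomes a \emph{pointwise} gap; losing at most an $\alpha/2$ factor on each side will give the required $\tfrac12\alpha^2|T|$ lower bound.

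The main step is Markov's inequality applied at a carefully chosen threshold. Define
\[
d_0 \;:=\; \frac{m_0}{1-\alpha/2}, \qquad d_1 \;:=\; \frac{m_1-\alpha|S|/2}{1-\alpha/2}.
\]
Since $e(h,S)\geq 0$, Markov on $T_0^*$ gives $\Pr_{h\in T_0^*}[e(h,S)>d_0]\leq m_0/d_0 = 1-\alpha/2$, so
\[
T_0 \;:=\; \{\,h\in T_0^*\,:\, e(h,S)\leq d_0\,\}
\]
has $|T_0|\geq (\alpha/2)|T_0^*|\geq \tfrac12\alpha^2|T|$. The symmetric argument applies Markov to the non-negative variable $|S|-e(h,S)$ on $T_1^*$ (whose mean is $|S|-m_1$) at the threshold $|S|-d_1 = (|S|-m_1)/(1-\alpha/2)$, yielding
\[
T_1 \;:=\; \{\,h\in T_1^*\,:\, e(h,S)\geq d_1\,\}, \qquad |T_1|\geq \tfrac12\alpha^2|T|.
\]

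It remains to check the separation. Subtracting and using $m_1-m_0\geq \alpha|S|$,
\[
d_1-d_0 \;=\; \frac{m_1-m_0-\alpha|S|/2}{1-\alpha/2} \;\geq\; \frac{\alpha|S|/2}{1-\alpha/2} \;\geq\; \frac{\alpha|S|}{2} \;\geq\; \frac{\alpha|S|}{4},
\]
as required; disjointness of $T_0,T_1$ is inherited from $T_0^*,T_1^*$ (and is in any case forced by $d_0<d_1$). I do not expect a substantive obstacle once one frames the argument as two Markov applications to non-negative variables on $T_0^*$ and on $T_1^*$; the only real design question is the pair $(d_0,d_1)$, which must simultaneously be close enough to the respective means that Markov leaves a good-side mass of $\alpha/2$, yet separated enough to preserve an $\Omega(\alpha)\cdot|S|$ gap. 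The choice above exactly balances these two requirements and consumes the full strength of the average-gap $\alpha|S|$ coming from separability.
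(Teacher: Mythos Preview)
Your proof is correct. The route differs from the paper's: instead of applying Markov separately inside $T_0^*$ and $T_1^*$, the paper sorts all of $T$ by $e(\cdot,S)$, replaces $T_0^*,T_1^*$ by the bottom and top $\alpha|T|$ order statistics (which can only widen the density gap), and then argues by contradiction that the $(\tfrac12\alpha^2|T|)$-th smallest and $(\tfrac12\alpha^2|T|)$-th largest values of $e(\cdot,S)$ must differ by at least $\tfrac{\alpha}{2}|S|$. Both proofs are elementary averaging arguments and reach the same quantitative conclusion. Your direct Markov argument is constructive and yields explicit closed-form thresholds $d_0,d_1$; the paper's quantile argument has the minor advantage that the resulting $T_0,T_1$ depend only on $S$ and the global ordering of $e(\cdot,S)$ over $T$, not on the specific separability witnesses $T_0^*,T_1^*$, which is slightly more convenient when implementing the oracle.
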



The next theorem proves the correctness of Algorithm~\ref{alg:BMRM} (the proof appears in the Appendix). 
For brevity we define \emph{$(m,b,\delta,\close)$-bounded memory learning algorithm} as an algorithm that uses at most $m$ labeled examples sampled from the uniform distribution, $b$ bits of memory, and returns a hypothesis that is $\epsilon$-close to the correct hypothesis with probability at least $1-\delta.$ We omit the $O$ symbol for simplicity. 


\begin{theorem}\label{thm-learning-bounded-memory}
For any hypothesis class $\cH$ that is $(\regular,\close)$-separable, Algorithm~\ref{alg:BMRM} is a $$\left(\log|\cH|\cdot\frac{\log\log|\cH|+\log\nicefrac{1}{\regular}}{\regular^5},\, \log|\cH|\cdot\frac{1}{\regular^2},\, 0.1,\,\close\right)-$$$\text{bounded memory algorithm for }\cH.$
%
%
\end{theorem}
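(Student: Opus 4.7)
The plan is to verify four things in turn: (i) the true hypothesis $f$ stays in $T$ throughout with high probability; (ii) $|T|$ shrinks by a factor $(1-\Omega(\regular^2))$ per iteration; (iii) the memory usage is $O(\log|\cH|/\regular^2)$ bits; and (iv) the sample complexity and failure probability match the stated bound.

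For (i) and (ii), in the \emph{tight} branch the subroutine Is-close$(h,\close,k)$ correctly decides (up to failure probability $2^{-\Omega(k)}$ by Chernoff, as verified in the Appendix) whether $h$ is $\close$-close to $f$; if so the algorithm halts with a valid output. Otherwise $h$ is not even $3\close$-close, meaning $d(h,f)>3\close>\close$, so $f\notin B_h(\close)$ and deleting $B_h(\close)$ preserves the invariant while removing at least $\regular|T|$ hypotheses. In the \emph{non-tight} branch, Claim~\ref{clm-differentiator-then-vertex} guarantees the oracle produces $S,T_0,T_1,d_0,d_1$ with $|T_0|,|T_1|\ge \tfrac12\regular^2|T|$ and a density gap of $\Omega(\regular)$ between $T_0$ and $T_1$. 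If Estimate returns $d(S,f)$ within additive error strictly less than half of this gap (again achievable with parameter $k$ via Chernoff), then comparing against the midpoint correctly identifies whichever of $T_0,T_1$ does not contain $f$; removing it shrinks $|T|$ by at least $\tfrac12\regular^2|T|$. Hence the algorithm terminates within $O(\log|\cH|/\regular^2)$ iterations.

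For (iii), which is the main obstacle, note that $T$ must never be stored as an explicit subset of $\cH$, since that would cost $|\cH|$ bits. Instead, conditional on the class $\cH$ (fixed and known a priori), the current $T$ is a deterministic function of the sequence of past branch outcomes (which branch of Algorithm~\ref{alg:BMRM} was entered and, in the non-tight branch, the sign of $r|S|-(d_0+d_1)/2$). The algorithm therefore stores only this $O(\log|\cH|/\regular^2)$-bit branch history; the oracle, assumed itself to be bounded memory, regenerates $T$ together with the associated $h$ or $(S,T_0,T_1,d_0,d_1)$ on demand from $\cH$ and the history. The scratch counters inside Is-close and Estimate add only $O(\log k)$ bits, producing the claimed $O(\log|\cH|/\regular^2)$ total.

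For (iv), set $k=\Theta(\log\log|\cH|+\log(1/\regular))$ so that each subroutine call fails with probability $O(\regular^2/\log|\cH|)$; a union bound over the $O(\log|\cH|/\regular^2)$ iterations gives overall failure at most $0.1$. Is-close uses $k$ samples per call, while Estimate requires enough samples that $\Omega(\regular)$ of them land in $S$ and then an additional $\Omega(\regular)$-accurate density estimate is obtained, costing $O(k/\regular^3)$ samples per call. Multiplying the per-iteration cost by the iteration count yields the total $O(\log|\cH|\cdot(\log\log|\cH|+\log(1/\regular))/\regular^5)$, matching the theorem.
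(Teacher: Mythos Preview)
Your proof is correct and matches the paper's own argument essentially step for step: the $\Omega(\regular^2)$ per-iteration shrinkage from Claim~\ref{clm-differentiator-then-vertex} bounds the number of iterations by $O(\log|\cH|/\regular^2)$; $T$ is stored implicitly as the $O(1)$-bit-per-iteration branch history so that memory is $O(\log|\cH|/\regular^2)$ plus logarithmic scratch for the counters; and the confidence parameter is tuned so that a union bound over all iterations gives total failure at most $0.1$.

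One small slip in part (iv): Is-close$(h,\close,k)$ has failure probability $2e^{-2k\close^2}$, not $2^{-\Omega(k)}$, so with your choice $k=\Theta(\log\log|\cH|+\log(1/\regular))$ the tight-branch test does \emph{not} achieve the required per-call failure probability $O(\regular^2/\log|\cH|)$ unless $\close$ is a constant. The paper (implicitly using $\close\geq\regular$) takes $k=\Theta((\log\log|\cH|+\log(1/\regular))/\regular^2)$ as the parameter passed to both subroutines; equivalently, in your accounting Is-close needs $\Theta(k/\close^2)$ samples rather than $k$. This does not affect the stated bound, since Estimate at $\Theta(k/\regular^3)$ samples per call remains the dominant cost.
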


If $\regular^{-2}$ is smaller than $\log|\cX|$ we get that Algorithm~\ref{alg:BMRM} is indeed a bounded-memory algorithm.
On the other hand, the number of samples is increased by a small factor of $\nicefrac{(\log\log|\cH|+\log\nicefrac{1}{\regular})}{\regular^5}$ compared to the $\log|\cH|$ examples needed in case $\VC{\cH}\approx\log|\cH|$. 
One might wonder how the number of samples does not depend on $\close$. Taking a closer look, we observe that this is not the case  as generally $\epsilon$ is lower bounded by a function of $\alpha$. Take for example $T$ that contains hypotheses that disagree on exactly $\alpha^2$ of the examples. 
If $\epsilon<\alpha^2$, then $T$ is not $(\alpha, \epsilon)$-tight thus the class is not $(\alpha, \epsilon)$-separable. 


\section{Separable Classes: Examples}\label{sec:applications}
In this section we present a few natural classes and prove they are separable. 
This implies, using Theorem~\ref{thm-learning-bounded-memory}, that they are properly learnable with bounded memory. 

\subsection{Threshold Functions}\label{subsec:threshold_functions}
 The class of threshold functions in $[0,1]$ is $\{h_b:[0,1]\rightarrow\{0,1\} : b\in[0,1]\}$ and $h_b(x)=1 \Leftrightarrow x\leq b.$ 
 The class of discrete thresholds $\cH_{TH;n}$ is defined similarly but over the discrete domain $\cX$ of size $n$ with $\cX=\left\{\frac{1}{n},\frac{2}{n},\ldots,\frac{n}{n}=1\right\}$  and $b\in\left\{\frac{1}{2n},\frac{3}{2n},\ldots,\frac{2n+1}{2n}\right\}.$ This class is known to be easily learnable in the realizable case  by simply taking the largest example with label $1$. 
This simple algorithm uses $O(\log n)$ bits and $O(1/\epsilon)$ examples for accuracy $\close$ and constant confidence. 
We use this class  to demonstrate how to
 \begin{inparaenum}[a)]
\item prove that a class is separable 
\item implement the oracles required by the general bounded-memory algorithm, see Section~\ref{sec:algorithms}.
\item use a few simple tricks that enables designing a faster implementation of the general algorithm. 
\end{inparaenum}

One can prove that the class $\cH_{TH; n}$ is $(\regular,\regular)$-separable,  for any $0<\regular<1/3$.
The proof appears in the Appendix and 
the main ideas are presented in Figure~\ref{fig:thershold_functions}. 
\begin{figure}
\begin{center}
 \includegraphics[scale=0.9]{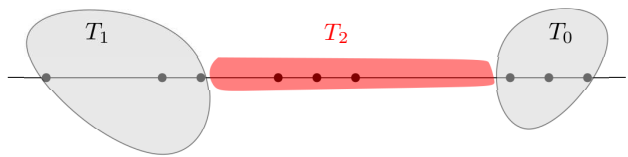}
 \end{center}
\caption{The solid line represents the interval $[0,1]$ and the dots represent the hypotheses in $T$. The subset $T_0\subseteq T$ ($T_1\subseteq T$) represents the $\regular|T|$ largest (smallest) hypotheses in $T$. All the other hypotheses in $T$ are denoted by $T_2.$ If $T_2$ is contained in an interval of length smaller than $\regular$, then $T$ is $(\regular, \regular)$-tight. Otherwise, $d(S,T_1)-d(S,T_0)\geq\regular$ for $S=\cX.$}
\label{fig:thershold_functions}
 \end{figure}
There are a few tricks we can use to design a more efficient implementation of the general algorithm. 
During the run of the algorithm the only possible $T$'s are intervals. 
This means that only two values are needed in order to describe $T.$
If the length of  $T=[a_1,a_2]$ is at most $\close$ (i.e. $|a_2-a_1|\leq\close$) then we are done, as $T\subseteq B_{h_{(a_1+a_2)/2}}(\close).$
At each step of the algorithm, we define $S$ to be a large interval in the middle of $T,$ namely $S=\left[a_1+\frac{a_2-a_1}{3},a_2-\frac{a_2-a_1}{3}\right].$
Note that $|S|\geq\frac{\close}{3}|\cX|.$ We define
 $T_0 = \{ h_b\in T : b < a_1+\frac{a_2-a_1}{3} \}$ and $T_1 = \{ h_b\in T : b > a_2-\frac{a_2-a_1}{3} \}.$
Note that since $d(S,T_0)=0$ and $d(S,T_1)=1$ only one sample from $S$ suffices to decide whether to delete $T_0$ or $T_1$ in the PAC framework (the realizable case). 
The algorithm uses $O(\log n)$ bits (to save $a_1$ and $a_2$) and $O(\frac1\close\log\frac1\close)$ examples.
The algorithm runs in time $O(\log n + \frac1\close\log \frac1\close)$.

\subsection{Equal-Piece Classifiers}\label{subsec:equal_piece} 
Each hypothesis in the class equal-piece classifiers $h\in \cH_{EP; p}$ corresponds to a (disjoint) union of intervals each of length exactly $p<1$, that is, $\bigcup[a^h_i, a^h_i+p]$ and $h(x)=1$ if and only if $x$ is inside one of these intervals.
More formally, the examples are the numbers $\cX=\left\{\frac{1}{n},\frac{2}{n},\ldots,\frac{n}{n}=1\right\}$ and the hypotheses $h\in\cH_{EP; p}$ correspond to the parameters $a^h_1,a^h_2,\ldots,a^h_k$ with $a^h_1+p< a^h_2,\ldots,a^h_{k-1}+p< a^h_k, a^h_k+p<1$ and they define the intervals 
$[a^h_1,a^h_1+p],[a^h_2, a^h_2+p],\ldots,[a^h_k, a^h_k+p].$
An example $x\in\cX$ has $h(x)=1$ if and only if there is $1\leq i\leq k$ such that $x\in [a^h_{i}, a^h_{i} + p].$

Note that the class $\cH_{EP; p}$ is quite complex since it is easy to verify that it has a VC-dimension of at least $1/p$ (partition $[0,1]$ into $p$ consecutive equal parts and take one point from each part --- this set is shattered by $\cH_{EP; p}$). The next theorem shows the class is separable and thus, by Theorem~\ref{thm-learning-bounded-memory}, Algorithm~\ref{alg:BMRM} can be applied to equal-piece classifiers. 
Time-efficient implementation can be found in the Appendix.

\begin{theorem}\label{clm:application:equal_piece}
For any $\regular,\close\in(0,1)$ with $\close<\nicefrac{24}{p}$ and $\frac{2}{|\cX|}<\regular<\frac{p^2\close}{24}$ the class $\cH_{EP; p}$ is $(\regular,\close)$-separable. 
\end{theorem}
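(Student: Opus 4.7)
Given $T\subseteq\cH_{EP;p}$ that is not $(\regular,\close)$-tight, my plan is to produce $S\subseteq\cX$ together with disjoint $T_0,T_1\subseteq T$ meeting the requirements of Definition~\ref{dfn:strongly-non-mixing}. I will arrange that $S$ is a short discrete interval on which every $h\in T_1$ is identically $1$ and every $h\in T_0$ is identically $0$, so that $d(S,T_0)=0$ and $d(S,T_1)=1$, giving a density gap of $1\ge\regular$ (comfortably more than the $\regular$ required).

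\emph{Step 1 (locate a disagreement point via pairwise averaging).} Set $\phi(x):=|\{h\in T:h(x)=1\}|/|T|$. The mean pairwise symmetric-difference of two hypotheses drawn uniformly from $T$ equals $2\int_0^1\phi(1-\phi)\,dx$. Non-tightness says every $\close$-ball $B_{h^*}(\close)$ meets $T$ in fewer than $\regular|T|$ points; hence two independent uniform draws from $T$ lie within $\close$ with probability less than $\regular$, and the mean distance is at least $(1-\regular)\close$. Therefore $\int\phi(1-\phi)\ge(1-\regular)\close/2$. The integrand is bounded by $\beta$ off the set $A_\beta:=\{x\in\cX:\phi(x)\in[\beta,1-\beta]\}$ and by $1/4$ on it, so $|A_\beta|/|\cX|\gtrsim\close$ as soon as $\beta$ is a small constant multiple of $\close$.

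\emph{Step 2 (extend the disagreement point to a constant-value interval).} Each hypothesis in $\cH_{EP;p}$ has at most $2/p$ interval boundaries, so the $s$-neighborhood of a uniform random $x_0\in[0,1]$ contains a boundary of a fixed $h$ with probability at most $4s/p$. Averaging only over $x_0\in A_\beta$ yields some $x_0\in A_\beta$ at which the fraction of hypotheses having a boundary in $[x_0-s,x_0+s]$ is at most $\gamma:=4s/(p\,|A_\beta|/|\cX|)$. For every non-contaminated hypothesis, $h\equiv h(x_0)$ on the entire discrete interval $S:=[x_0-s,x_0+s]\cap\cX$.

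\emph{Step 3 (define $T_0,T_1$ and choose the parameters).} Put $T_1:=\{h\in T:h\equiv1\text{ on }S\}$ and $T_0:=\{h\in T:h\equiv0\text{ on }S\}$. Since $\phi(x_0)\in[\beta,1-\beta]$, both of the subsets $\{h:h(x_0)=1\}$ and $\{h:h(x_0)=0\}$ have size at least $\beta|T|$; removing the at most $\gamma|T|$ boundary-contaminated hypotheses leaves $|T_0|,|T_1|\ge(\beta-\gamma)|T|$. Take $s$ slightly above $\regular/2$ so that $|S|/|\cX|\ge\regular$ (here the assumption $\regular>2/|\cX|$ is exactly what guarantees $S$ is a nondegenerate discrete interval), and pick $\beta$ as an appropriate constant multiple of $\close$ so that $\beta-\gamma\ge\regular$ within the range $\regular<p^2\close/24$. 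With $d(S,T_0)=0$ and $d(S,T_1)=1$, the density gap is $1\ge\regular$, completing $(\regular,\close)$-separability.

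\emph{Main obstacle.} The delicate part is the simultaneous balance in Step 3: the parameters $s$, $\beta$, and the resulting lower bound on $|A_\beta|$ must be chosen so that (i) $|S|/|\cX|\ge\regular$, (ii) both $\{h(x_0)=0\}$ and $\{h(x_0)=1\}$ still contain at least $\regular|T|$ hypotheses after stripping off the boundary-contaminated ones, and (iii) $\gamma=4s/(p|A_\beta|/|\cX|)$ is kept below $\beta-\regular$. The factor of $p$ in $\gamma$ comes from the boundary density $2/p$ per hypothesis, and reappears combined with the $|A_\beta|$--versus--$\beta$ tradeoff of Step~1 to produce the quadratic $p^2$ in the admissible range of $\regular$; the hypothesis $\close<24/p$ in the statement is exactly what keeps the choice of $\beta$ compatible with $\phi$ having enough intermediate mass. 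Outside this bookkeeping, the rest of the argument is routine.
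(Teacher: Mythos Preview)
Your averaging approach is clean and genuinely different from the paper's sliding-window induction. There the authors sweep an interval $I$ of length $2\regular$ across $[0,1]$, maintaining a shrinking $T^i\subseteq T$ on which all surviving hypotheses agree on the prefix $[0,u_i]$; if no separating $I$ is ever found, the final $T^i$ lies in a single $\close$-ball of size $\ge\regular|T|$, contradicting non-tightness. Your second-moment computation on $\phi$ plus boundary counting is more direct and conceptually appealing.

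Unfortunately the bookkeeping in Step~3 does not close within the stated parameter range. From Step~1 you get $|A_\beta|/|\cX|\ge 2(1-\regular)\close-4\beta$, so you are forced to take $\beta<\close/2$, and in any case $|A_\beta|/|\cX|\lesssim\close$. Step~2 then yields $\gamma\le 4s\big/\bigl(p\cdot|A_\beta|/|\cX|\bigr)\gtrsim s/(p\close)$, and with $s\approx\regular/2$ you get $\gamma\gtrsim\regular/(p\close)$. The requirement $\beta-\gamma\ge\regular$ (or even just $\gamma\lesssim\beta$ to keep a nontrivial density gap) then forces $\regular\lesssim p\close^2$. But the theorem only assumes $\regular<p^2\close/24$, which is a strictly larger range whenever $\close<p$. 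Concretely, take $p=1/2$, $\close=10^{-2}$, $\regular=10^{-4}$: the theorem's hypothesis holds ($p^2\close/24\approx 10^{-4}$), yet $\gamma\gtrsim 10^{-4}/(0.5\cdot10^{-2})=2\cdot10^{-2}$ already exceeds every admissible $\beta<\close/2=5\cdot10^{-3}$, so $\beta-\gamma<0$ and both $T_0,T_1$ are wiped out. Your appeal to the hypothesis $\close<24/p$ does not rescue this: since $p<1$, that condition is satisfied automatically for every $\close\in(0,1)$ and carries no information. The paper's argument sidesteps the issue because its error accumulates as roughly $\regular\cdot(\text{number of windows})^2\approx\regular/p^2$, which is precisely where the $p^2\close$ threshold originates; your averaging step loses a factor $|A_\beta|\sim\close$ when restricting $x_0$ to $A_\beta$, and that loss is not recoverable without a different mechanism.
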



%

\subsection{Decision Lists}\label{subsec:applications:decision_lists}
A decision list is a function defined over $n$ Boolean inputs of the following form:
\[\textbf{if } \ell_1 \textbf{ then } b_1 \textbf{ else if } \ell_2 \textbf{ then } \ldots \textbf{if }\ell_k \textbf{ then } b_k \textbf{ else } b_{k+1},\]
where $\ell_1,\ldots,\ell_k$ are literals over the $n$ Boolean variables and $b_1,\ldots,b_{k+1}$ are bits in $\{0,1\}.$
We say that the $i$-th \emph{level} in the last expression is the part ``$\textbf{if }\ell_i \textbf{ then } b_i$'' and the literal $\ell_i$  \emph{leads} to the bit $b_i$. Given some assignment to the $n$ Boolean variables we say that the literal $\ell_i$ is \emph{true} if it is true under this assignment. 
Note that there is no need to use the same variable twice in a decision list. In particular, we can assume without loss of generality that $k = n$. 
Denote the set of all decision lists over $n$ Boolean inputs by $\cH_{DL;n}.$  
Note that \[|\cH_{DL;n}|\leq n!\cdot 4^n \Rightarrow \log|\cH_{DL;n}|\leq n\log n + 2n\]
 (the first inequality is true since at each level we need to choose a variable that was not used before, to decide whether it will appear with a negation or not and if true, whether it will lead to $0$ or $1$).

\begin{theorem}\label{thm:applications:decision_list}
For any $\close\in(2^{-n},1)$ the class $\cH_{DL;n}$  is $\left(\min\left\{\frac1{200n^4},\close\right\},\close\right)$-separable. 
\end{theorem}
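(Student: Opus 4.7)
The plan is to establish $(\alpha,\epsilon)$-separability directly from Definition~\ref{dfn:strongly-non-mixing}. Fix $T\subseteq \cH_{DL;n}$ that is not $(\alpha,\epsilon)$-tight, with $\alpha=\min\{1/(200n^4),\epsilon\}$, and set $K=\lceil \log_2(1/\epsilon)\rceil\le n$. The crucial observation is that two decision lists sharing a length-$K$ prefix agree on a $1-2^{-K}\ge 1-\epsilon$ fraction of assignments, hence are $\epsilon$-close; so any length-$K$ prefix $p$ with $|T_p|\ge\alpha|T|$ would make $T$ tight. All structural information comes from exploiting this constraint.

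I would build a greedy prefix path $p_0=\emptyset,p_1,\dots$ where each $p_k$ extends $p_{k-1}$ by the most popular level-$k$ tuple in $T_{p_{k-1}}$, and let $k^*$ be the smallest index for which the top level-$k^*$ sub-class misses at least $4n\alpha|T|$ lists to the other level-$k^*$ sub-classes. Such a $k^*\le K$ must exist: otherwise $|T_{p_K}|\ge(1-4nK\alpha)|T|>\alpha|T|$ by telescoping, contradicting not-tight. A short induction shows $|T_{p_{k^*-1}}|\ge |T|/2$, so the top level-$k^*$ sub-class $T_{c_1}$ has size $\ge|T_{p_{k^*-1}}|/(4n)\ge|T|/(8n)\ge\alpha|T|$, and among the at most $4n-1$ non-top sub-classes carrying $\ge 4n\alpha|T|$ lists in total, some sub-class $T_{c_2}$ has $|T_{c_2}|\ge\alpha|T|$. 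Let $(\ell^{(1)},b^{(1)})\ne(\ell^{(2)},b^{(2)})$ be the level-$k^*$ tuples of $T_{c_1},T_{c_2}$ and write $p^*:=p_{k^*-1}$.

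The natural separating set is $S=\{x:p^*\text{ is false on }x\}$, whose density is $2^{-(k^*-1)}\ge 2^{-K+1}\ge\epsilon\ge\alpha$. I would case-analyze on how the two level-$k^*$ tuples differ. When only the bits differ (same literal), refining $S$ to $S\cap\{\ell^{(1)}\text{ true}\}$ yields densities $b^{(1)}\ne b^{(2)}$ with gap $1$. When the literals use different variables and the bits also differ, refining $S$ to $S\cap\{\ell^{(1)}\text{ true},\ell^{(2)}\text{ true}\}$ again yields gap $1$. In the remaining cases the identity $d(S,T_{c_j})=\tfrac12 b^{(j)}+\tfrac12\nu_{c_j}$, where $\nu_{c_j}$ is the expected density of the sub-list beyond level $k^*$ over the uniform distribution on the remaining $n-k^*$ variables averaged over $h\in T_{c_j}$, shows separation on $S$ succeeds whenever $|(b^{(1)}-b^{(2)})+(\nu_{c_1}-\nu_{c_2})|\ge 2\alpha$.

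The main obstacle is the residual sub-cases where direct separation on $S$ fails; the failed inequality pins the $\nu_{c_j}$ into narrow windows fixed by the bits. I would argue that these pinned values force a large fraction of $T_{c_1}\cup T_{c_2}$ to lie $\epsilon$-close to a single decision list $h^*\in\cH_{DL;n}$, contradicting not-tight. For instance, when $b^{(1)}=b^{(2)}=b$ and both $\nu_{c_j}\approx b$, take $h^*=p^*\cdot(\text{default }b)$; when the literals share a variable with opposite signs and the $(b^{(j)},\nu_{c_j})$ are complementary, the two natural extensions $p^*\cdot(\ell^{(j)},b^{(j)})\cdot(\text{default }b^{(3-j)})$ define the same Boolean function and either serves as $h^*$. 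Markov's inequality applied to the disagreement probability $\Pr_x[h(x)\ne h^*(x)]$, whose expectation over $h\in T_{c_j}$ is bounded by $2^{-k^*}\cdot\alpha\le\alpha/2\le\epsilon/2$, shows at least half of each $T_{c_j}$ lies in $B_{h^*}(\epsilon)$, yielding $|B_{h^*}(\epsilon)\cap T|\ge\alpha|T|$. The trickiest residual sub-case is different variables with equal bits and intermediate $\nu_{c_j}$; here I plan a finer analysis conditioning $S$ on the two level-$k^*$ variables (a four-cell partition of $S$) to either locate a refined separator or construct an $h^*$ matching the conditional tail densities on each cell.
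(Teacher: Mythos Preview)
Your plan has a real gap in the sub-case you yourself flag as trickiest: equal bits $b^{(1)}=b^{(2)}=b$ with the two level-$k^*$ literals on distinct variables. Failure of separation on $S$ gives only $|\nu_{c_1}-\nu_{c_2}|<2\alpha$; contrary to what you write, this does \emph{not} pin either $\nu_{c_j}$ to a window determined by the bits---both can sit at a common value anywhere in $[0,1]$. Your four-cell refinement then forces $\nu_{c_1}^{FT},\nu_{c_2}^{TF}\approx b$ but still leaves the common value $\nu_{c_1}^{FF}\approx\nu_{c_2}^{FF}$ free. At that point ``construct an $h^*$ matching the conditional tail densities'' cannot deliver tightness: a constraint on the \emph{average} density of $T_{c_j}$ over a cell says nothing about individual hypotheses being $\epsilon$-close to any fixed list. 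Lists in $T_{c_1}$ can share every cell-density you compute while their sub-lists behave arbitrarily (and pairwise differently) on the $(F,F)$ cell; when that cell has mass $2^{-k^*-1}\gg\epsilon$ (i.e.\ $k^*$ is well below $K$), no single $h^*$ is $\epsilon$-close to an $\alpha$-fraction of them. You therefore obtain neither a separator between your committed pair $T_{c_1},T_{c_2}$ nor a contradiction to non-tightness, and the plan provides no mechanism to proceed further (for instance, to recurse into level $k^*{+}1$).

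The paper avoids this obstruction by maintaining a looser invariant than exact-prefix agreement: the surviving set $T^i$ need only be identical up to some level $j$ and then carry a single common \emph{bit} $b$---with arbitrary literals---on levels $j{+}1,\dots,i$. Same-bit literal divergence at a level is therefore absorbed (the induction simply advances $i$) rather than split, and the argument only branches when a substantial fraction of $T^i$ flips to bit $\bar b$. At such a branching level one automatically has one literal leading to $0$ and one leading to $1$, which is exactly the easy configuration in your case analysis. Repairing your approach would require inserting a merge-and-recurse step whenever $b^{(1)}=b^{(2)}$; carrying that through essentially reconstructs the paper's bit-tracking induction in place of your tuple-tracking one.
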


In the appendix we show an efficient implementation both in time and in space of the main algorithm to the case of decision lists. 
Rivest \cite{rivest87} described a learning algorithm for this class. 
However, the suggested algorithm saves all the given examples, and thus does not qualify as a bounded-memory algorithm.
Rivest's algorithm uses $O(n^2 \log n\cdot \nicefrac1\epsilon)$ memory bits and $O(n\log n\cdot\nicefrac1\epsilon)$ examples
to learn with constant confidence parameter and accuracy parameter $\epsilon$. 
 Our algorithm is able to learn this class with only $O(n\log\frac{1}{\epsilon})$ bits of memory, which is a quadratic improvement in $n$ as well as an improvement in $1/\epsilon$.
 Our algorithm introduces an increase in the number of examples to $O\left({n\log n}\cdot\nicefrac{1}{\epsilon^2} \log\nicefrac1\close\right)$, but the increase only depends on $1/\epsilon$. 
In the Appendix you can find an efficient implementation, both in space and in time, of the general algorithm for the the class $\cH_{DL;n}$. 

Several works described algorithms for learning decision lists \cite{nevo02, dhagat94, klivans06}, however those works learn this class under several assumptions. Ignoring those assumptions leaves their algorithm with a non-polynomial number of examples, which is  a minimal requirement  for any learning algorithm.
In a different work \cite{long07} 
 limit themselves to the uniformly distributed examples scenario, as our general algorithm does.
They design a weak learner and use the MadaBoost algorithm \cite{domingo00} which is a boosting-by-filtering algorithm \cite{schapire90}. 
As a consequence of using the MadaBoost algorithm they use assumptions stated in  \cite{domingo00}. Further, they improper learn the class while our general algorithm provides a proper bounded-memory learning algorithm. 

\section{Discussion and Open Questions}
In this paper we introduced the concept of separability, which is similar to non-mixing, suggested a general bounded memory algorithm, proved its correctness for classes that are separable, and  proved it is also an sq-algorithm. 
We also derived time-efficient bounded memory algorithms for three natural classes: threshold functions, equal-piece classifiers, and decision lists.

Several questions remain open. One question is to prove that other classes are separable and thus can be learned with the general bounded memory algorithm. 
A second open problem is to extend our work to the case of an unknown distribution over the examples. Another open problem is to close the gap between separability and non-mixing. 



\bibliography{learnability}
\bibliographystyle{plain}

\appendix
\section{Outline}
This appendix contains several technical claims: 
 the classes presented in the main text are separable (Section~\ref{apx:class_are_separable}), 
 time-efficient implementation of the general algorithm to these classes (Section~\ref{apx:time_efficent_implementations}), 
 correctness proof of the general algorithm for classes that satisfy separability (Section~\ref{apx:technical_proofs: Proofs of Claims from Section 3 in the Main Text}), and a proof that the general algorithm is also a statistical query algorithm ({Section~\ref{apx:general_alg_sq_algorithm}}).

\section{Separability}\label{apx:class_are_separable}
In this section we formally prove that all classes introduced in the main text are in fact \emph{separable}. 

\subsection{The class of threshold functions is separable}
\begin{theorem}
For any $0<\regular<1/3$, the class $\cH_{TH; n}$ is $(\regular,\regular)$-separable.
\end{theorem}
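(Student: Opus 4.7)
My plan follows the geometric picture in Figure~\ref{fig:thershold_functions}. The first step is to translate tightness and closeness into statements about the underlying thresholds. Under the uniform distribution on $\cX$, two hypotheses $h_{b_1}, h_{b_2}$ disagree precisely on the discrete examples lying in the interval $(b_1, b_2]$, so $\Pr_{x \sim \cU}(h_{b_1}(x) \neq h_{b_2}(x)) = |b_2 - b_1|$ up to an $O(1/n)$ discretization error. Consequently the ball $B_{h_b}(\regular)$ equals $\{h_{b'} : |b' - b| \le \regular\}$, and the entire problem reduces to reasoning about the sorted sequence of thresholds of the hypotheses in $T$.

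Fix a subset $T \subseteq \cH_{TH;n}$ that is not $(\regular,\regular)$-tight, write $m = |T|$, and list its thresholds as $b_1 < b_2 < \cdots < b_m$. Set $k = \lceil \regular m \rceil$ and define
\[
 T_0 = \{h_{b_1}, \ldots, h_{b_k}\}, \qquad T_1 = \{h_{b_{m-k+1}}, \ldots, h_{b_m}\}, \qquad T_2 = T \setminus (T_0 \cup T_1).
\]
These three sets are disjoint, and $|T_0|, |T_1| \ge \regular m$ by construction; the assumption $\regular < 1/3$ also gives $|T_2| \ge (1 - 2\regular)m \ge \regular m$ once $m$ is not too small. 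The very small values of $m$ are handled by observing that any non-tight $T$ must have $m$ large enough that $[0,1]$ can actually accommodate the required spacing of its thresholds, which rules out the boundary cases.

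The key claim, which is the heart of the proof, is that $b_{m-k} - b_{k+1} > \regular$. This follows from non-tightness by contraposition: if $b_{m-k} - b_{k+1} \le \regular$, then $T_2 \subseteq B_{h_{b_{k+1}}}(\regular)$, and hence $|T \cap B_{h_{b_{k+1}}}(\regular)| \ge |T_2| \ge \regular m$, contradicting the non-tightness hypothesis.

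I finish by taking $S = \cX$, which trivially satisfies $|S| \ge \regular|\cX|$. Since $d(S, h_b) = b$ up to the same $O(1/n)$ discretization, $d(S, T_0)$ is an average of quantities at most $b_k$ while $d(S, T_1)$ is an average of quantities at least $b_{m-k+1}$, so
\[
 d(S, T_1) - d(S, T_0) \;\ge\; b_{m-k+1} - b_k \;>\; b_{m-k} - b_{k+1} \;>\; \regular,
\]
which is exactly the $(\regular,\regular)$-separability condition. I expect the main obstacle to be simply bookkeeping: managing the $1/(2n)$ discretization and the rounding in $\lceil \regular m \rceil$ uniformly so that all three size inequalities and the density gap remain valid for every admissible $m$, especially as $\regular$ approaches $1/3$. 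The geometric content, once the reduction to sorted thresholds is in place, is immediate from the figure.
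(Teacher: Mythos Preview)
Your proposal is correct and follows essentially the same approach as the paper: take $S=\cX$, split $T$ into the $\regular|T|$ smallest thresholds, the $\regular|T|$ largest thresholds, and the middle $T_2$, then use non-tightness (together with $\regular<1/3$ so that $|T_2|\ge\regular|T|$) to force the middle to span more than $\regular$, yielding the density gap. The only cosmetic differences are that the paper centers the contradicting ball at the midpoint $(t_0+t_1)/2$ with radius $\regular/2$ rather than at $h_{b_{k+1}}$ with radius $\regular$, and it glosses over the integer-rounding and discretization bookkeeping that you (rightly) flag as the only residual obstacle.
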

\begin{proof}
Fix $\regular\in(0,1)$ and $T\subseteq\cH_{TH;n}$ which is not $(\regular,\regular)$-tight.  
We want to find $S\subseteq\cX$ with $|S|\geq\regular|\cX|$ and $T_0,T_1\subseteq T$ with $|T_0|,|T_1|\geq \regular|T|$ such that 
\begin{eqnarray}\label{eq:applications_discrete_threshold_functions_want_to_prove}
|d(T_1,S)-d(T_0,S)|\geq\regular
\end{eqnarray}
Take $S=\cX,$  which immediately implies that $|S|\geq\regular|\cX|.$
Denote by $t_0\in[0,1]$ the minimal value such that for $\regular|T|$ of  the hypotheses $h_b\in T$ it holds that $b\leq t_0$  and by $t_1\in[0,1]$ the maximal value such that for $\regular|T|$ of the hypotheses $h_b\in T$ it holds that $b\geq t_1.$ 
Take $T_0=\{h_b\in T : b\leq t_0\}$ and $T_1=\{h_b\in T : b\geq t_1\}$,  which immediately implies that $|T_0|,|T_1|\geq\regular|T|.$
Notice that for any $h_b$ in the class it holds that $d(h_b,S)=b.$
To prove~(\ref{eq:applications_discrete_threshold_functions_want_to_prove}), note that 
\begin{eqnarray*}
|d(T_1,S)-d(T_0,S)| &\geq&   t_1 - t_0. 
\end{eqnarray*}
Assume by contradiction that $t_1-t_0<\regular.$ Then, since for each $h_b\in T\setminus(T_0\cup T_1)$ it holds that $b\in(t_0,t_1)$ we get that 
$$T\setminus(T_0\cup T_1)\subseteq B_{\frac{(t_1+t_0)}{2}}\left(\frac{\regular}{2}\right)$$

Since $\regular\leq1/3$ we get that $1-2\regular\geq \regular$, thus $|T\setminus(T_0\cup T_1)|\geq\regular|T|$ which is a contradiction to the assumption that $T $ is not $(\regular,\regular)$-tight.

\end{proof}


\subsection{The class \texorpdfstring{$\cH_{DL;n}$}{TEXT} is separable}
\begin{theorem}
For any $\close\in(2^{-n},1)$ the class $\cH_{DL;n}$  is $\left(\min\left\{\frac1{200n^4},\close\right\},\close\right)$-separable. 
\end{theorem}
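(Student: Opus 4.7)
The plan is to analyze hypotheses in $T$ by their top level ``if $\ell$ then $b$'' and exhibit a separating triple $(S,T_0,T_1)$ via case analysis. Partition $T$ into the at most $4n$ top-level classes $C_{\ell,b}$. Since $\alpha\le 1/(200n^4)<1/(8n)$, the small classes (those of size $<\alpha|T|$) together cover fewer than $|T|/2$ hypotheses, so at least half of $T$ lies in big classes.

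The first step searches for a mixed-bit pair of big classes. If some literal $\ell$ has both $C_{\ell,0}$ and $C_{\ell,1}$ big, the set $S=S_\ell:=\{x:\ell(x)=1\}$ yields $d(S,C_{\ell,b})=b$, a gap of $1$. If instead $C_{\ell_1,1}$ and $C_{\ell_2,0}$ are big for distinct $\ell_1,\ell_2$, then $S=S_{\ell_1}$ gives $d(S,C_{\ell_1,1})=1$; while for $h\in C_{\ell_2,0}$, the half of $S$ on which $\ell_2$ fires forces $h=0$, so $d(S,h)\le 1/2$. Both sub-cases produce disjoint $T_0,T_1$ of size $\ge\alpha|T|$ with $|S|\ge |\cX|/2\ge\alpha|\cX|$ and gap $\ge 1/2\ge\alpha$.

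If no mixed-bit pair exists, all big classes share a common target bit $b^\ast$ (WLOG $b^\ast=1$). Let $L^+:=\{\ell:|C_{\ell,1}|\ge\alpha|T|\}$. When $|L^+|=1$, a unique configuration $(\ell^\ast,1)$ dominates, containing at least $(1-4n\alpha)|T|$ hypotheses that all agree on $S_{\ell^\ast}$; their mutual disagreements live on $\cX\setminus S_{\ell^\ast}$, where each hypothesis is described by its levels $2,\ldots,n+1$---a decision list on $n-1$ variables. Non-tightness of $T$ in $\cH_{DL;n}$ transfers to non-tightness of the restricted family $T'\subseteq\cH_{DL;n-1}$ at parameter $(2\alpha,2\epsilon)$, and I invoke the claim inductively on $n$: a split $(S',T_0',T_1')$ for $T'$ lifts to a split for $T$ by embedding $S'\hookrightarrow\cX\setminus S_{\ell^\ast}$, with $|S|\ge\alpha|\cX|$ and the gap preserved. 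When $|L^+|\ge 2$, I iterate the top-level analysis on the second (and, if needed, third and fourth) level: partition each big $C_{\ell,1}$ by second-level literal and bit and look for opposing second-level bits. Such a pair produces a direct split on a quarter of $\cX$ exactly as in the mixed-bit argument above; otherwise peel off another level. The assumption $\epsilon>2^{-n}$ guarantees termination: after $k\le\log_2(1/\epsilon)<n$ peeling levels, all surviving hypotheses share a length-$k$ prefix and are therefore mutually $\epsilon$-close, which would witness $(\alpha,\epsilon)$-tightness of $T$ and contradict the hypothesis.

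The main obstacle is the multi-literal same-bit case, in which two big top classes $C_{\ell_1,1}$ and $C_{\ell_2,1}$ do \emph{not} give an immediate gap on $S_{\ell_1}$ (both have densities in $[1/2,1]$ there), forcing a descent to lower levels and careful tracking of how $\alpha$ and $\epsilon$ degrade through the peeling iteration and the induction on $n$. The $n^4$ in the denominator of $\alpha$ absorbs the cumulative $O(n)$ losses from each of (roughly) four peeling steps---variable, sign, bit, and positional choices---while the constant $200$ provides slack for the factor-of-$2$ doubling of the tightness parameter at each inductive call.
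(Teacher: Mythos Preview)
Your overall strategy --- find two ``big'' top-level classes with opposite bits and separate on a set where the relevant literals fire --- matches the paper's starting intuition, but the proposal has two genuine gaps.

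\textbf{First, a small but real one.} In your mixed-bit case with ``distinct $\ell_1,\ell_2$'' you argue that half of $S_{\ell_1}$ has $\ell_2$ true, forcing $d(S_{\ell_1},h)\le 1/2$ for $h\in C_{\ell_2,0}$. This fails when $\ell_2=\bar{\ell_1}$ (distinct literals on the same variable): on $S_{\ell_1}$ the literal $\ell_2$ is \emph{never} true, so $d(S_{\ell_1},h)$ can be anywhere in $[0,1]$. The paper treats this pathological case explicitly and it requires a separate argument.

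\textbf{Second, and more fundamentally, your induction on $n$ in the $|L^+|=1$ branch cannot close.} Each recursive call halves the domain (from $\cX$ to $\cX\setminus S_{\ell^\ast}$), so after $k$ calls the ambient domain has shrunk by $2^k$. The inductive hypothesis then hands you $|S'|\ge \alpha_{n-k}\,|\cX|/2^k$, and lifting back you need $|S|\ge \alpha_n|\cX|$, i.e.\ $\alpha_{n-k}\ge 2^k\alpha_n$. In the regime $\epsilon>1/(200n^4)$ this reads $n^4\ge 2^k(n-k)^4$, which already fails for $k=1$ once $n\ge 7$. Since the $|L^+|=1$ branch can recur up to $\Theta(\log(1/\epsilon))=\Theta(n)$ times, the set $S$ you eventually produce is exponentially too small relative to the polynomial $\alpha$. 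Your remark that ``the constant $200$ provides slack for the factor-of-$2$ doubling at each inductive call'' cannot absorb $\Theta(n)$ calls.

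The paper avoids this by \emph{not} recursing on $n$. It runs a single level-by-level sweep maintaining one subset $T^i\subseteq T$ with a rigid structure (all hypotheses identical up to some level $j$, then agreeing on a common bit $b$ through level $i$), losing only a $(1-1/(4n^2))$ factor per level for at most $\log(1/\epsilon)\le n$ levels. A pair of opposing literals $\ell^0,\ell^1$ is located somewhere in these first $\log(1/\epsilon)$ levels via a ``useful literal'' pigeonhole count, and the separating set is then taken to be $S=\{x:\ell^0(x)=\ell^1(x)=\text{true}\}$, which has size $\ge|\cX|/4$ \emph{regardless of how deep the analysis went}. Keeping $|S|$ at a fixed fraction of $|\cX|$ is exactly what your recursion loses. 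Your $|L^+|\ge 2$ ``peeling'' sketch is closer in spirit to this mechanism than your induction on $n$, but it is not developed enough to tell whether it reproduces the needed structure; the paper's argument here is genuinely delicate (several sub-cases, the useful-literal counting claims, and the separate treatment of $\ell'=\bar{\ell}$), and the origin of the $n^4$ is not the ``four peeling steps'' you describe.
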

\begin{proof}
Fix $\epsilon\in(0,1)$ and $T\subseteq\cH_{DL; n}$ that is not $(\min\{\frac1{200n^4},\close\},\close)$-tight. 
To prove that $\cH_{DL; n}$ is $\left(\min\{\frac1{200n^4},\close\},\close\right)$-separable we will show that $T$ is not $\min\{\frac1{200n^4},\close\}$-tight. 
To show that we will find two literals $\ell^0,\ell^1$ and $T_0,T_1\subseteq T$ with $|T_0|,|T_1|\geq\frac{1}{200n^4}|T|$ that have the following properties $(\star)$:
\begin{enumerate}
    \item For all hypotheses in $T_0$: \label{item:app:DL_item1}
\begin{itemize}
    \item $\ell^0$ leads to $0$
    \item $\ell^0$ appears at level $i_0\leq\log\frac1\close + 1$
    \item $\ell^0$ is in a lower level than $\ell^1$ \label{item:app:DL_item13}
    

\end{itemize}
\item Similarly for all hypotheses in $T_1$: \label{item:app:DL_item2}
\begin{itemize}
    \item $\ell^1$ leads to $1$
    \item $\ell^1$ appear at level $i_1\leq\log\frac1\close + 1 $
    \item $\ell^1$ is in a lower level than $\ell^0$ \label{item:app:DL_item23}
    \end{itemize}
 \item There is a level $j\leq \min\{i_0,i_1\}$ and a bit $b\in\{0,1\}$ such that all hypotheses in $T_0\cup T_1$ \label{item:app:DL_item3}
 \begin{enumerate}
     \item are identical up to level $j$ \label{item:app:DL_identical}
     \item leads to the same value $b$ in levels $j+1$ to  $\max\{i_0,i_1\}-1$ \label{item:app:DL_same_bit}
 \end{enumerate}
 \end{enumerate}
 Note that for any decision list permuting consecutive literals that all lead to the same bit creates an equivalent decision list; thus, when we write ``identical decision lists'', we mean identical up to this kind of permutation. 
 
The correctness of the last three properties $(\star)$ will finish the proof since we can take $S$ to consist of all the assignments where the literals $\ell^0$ and $\ell^1$ are true. In this case it holds that $|S|\geq |\cX|/4$, the disjoint subsets $T_0,T_1$ are large (i.e., $|T_0|,|T_1|\geq\frac{1}{200n^4}$). To bound $|d(T_1,S)-d(T_0,S)|$ from below, we partition $S$ into two parts: $S_1$ all assignments such that at least one of the literals $\ell_1,\ldots,\ell_j$ are true (recall that level $j$ is defined in Item~\ref{item:app:DL_item3}), and $S_2=S\setminus S_1$. Assume without loss of generality that  bit $b$ defined in Item~\ref{item:app:DL_same_bit} is equal to $0$. Note that  
\begin{eqnarray*}
|d(T_1,S)-d(T_0,S)| &=& \left |\sum_{a\in S} \frac{e(T_1,a)}{|T_1||S|} - \frac{e(T_0,a)}{|T_0||S|} \right| \\
&=& \left |\sum_{a\in S_1} \frac{e(T_1,a)}{|T_1||S|} - \frac{e(T_0,a)}{|T_0||S|}
+ \sum_{a\in S_2} \frac{e(T_1,a)}{|T_1||S|} - \frac{e(T_0,a)}{|T_0||S|} \right|\\
&=&\left|\sum_{a\in S_2} \frac{e(T_1,a)}{|T_1||S|} - \frac{e(T_0,a)}{|T_0||S|} \right|\\
&=&\sum_{a\in S_2} \frac{e(T_1,a)}{|T_1||S|}\\
&\geq& 2^{-\max\{i_0,i_1\}+1}\geq\close,
\end{eqnarray*}
where the third equality follows from Item~\ref{item:app:DL_identical}, the fourth equality follows from Item~\ref{item:app:DL_same_bit}, and the first inequality follows from Item~\ref{item:app:DL_item2} since for each assignment that is false in all literals that appear before level $i_1$ we have that $e(T_1,a)=|T_1|$ and these assignments constitute a fraction $2^{-(i_1-1)}$ out of the assignments in $S$. The last inequality follows from Items~\ref{item:app:DL_item1},\ref{item:app:DL_item2}.

To prove that there are literals $\ell^0,\ell^1$ and subsets $T_0,T_1$ as desired in $(\star)$, we will prove by induction on level $i$ that if there are not literals $\ell^0,\ell^1$ up to level $i$, then there is a subset $T^{i}\subseteq T$ with $|T^{i}|\geq (1-\frac{1}{4n^2})^i|T|$, a bit $b\in\{0,1\}$ and $j\leq i$ such that for all hypotheses in $T^{i}$ $\quad(\star\star)$
 \begin{itemize}
     \item are identical up to level  $j$
     \item all literals in levels $j+1$ to $i$ lead to the same value $b$ 
 \end{itemize}

Proving $(\star\star)$ will finish the proof because if we do not find $\ell^0,\ell^1$ up to level $i\leq\log\frac1{\close}\leq n$ then we have that
\begin{eqnarray*}
|T^i| &\geq & \left(1-\frac{1}{4n^2}\right)^i|T|\\
& \geq &\left(1-\frac{1}{4n^2}\right)^n|T|\\
& \geq & \left(1-\frac1{4n}\right)|T|, \quad(\star\star\star)
\end{eqnarray*}
where in the second inequality we used the fact that $i\leq n$ and in the third inequality we used the fact that for any natural number $n$, the inequality $(1-x)^n\geq1-nx$ is true.
Thus, we have at least $\left(1-\frac1{4n}\right)|T|$ hypotheses in $T^i\subseteq T$ that are $\close$-close as we explain next, which is a contradiction to the assumption that $T$ is not tight.

Take the decision list $h$ which is exactly the same as all hypotheses in $T^i$ up to level $j$ and it returns $b$ afterwards. We will prove that all the hypotheses in $T^i$ are $\close$-close to $h$. 
Take any hypothesis $h'\in T^i$. 
All examples that cause one of the literals up until level $i$ to be true have the same label as $h$. 
Thus, these hypotheses agree on $\frac{1}{2} + \frac{1}{4} + \ldots + \frac{1}{2^i}=1-\frac{1}{2^i}\geq 1-\epsilon$ fraction of the examples.

We will prove that $(\star\star)$ is true by induction at level $i$. The basis $i=0$ is vacuously true.
If we do not find $\ell^0,\ell^1$ at level $i+1$ and the induction hypothesis holds for any $j\leq i$ then first recall that $i\leq\log\frac1\close$.

To continue, there are a few cases depending on the number of literals in level $i+1$ that lead to $\bar{b},$ where $\bar{b}$ denotes the opposite value of the bit $b$ (i.e., $\bar{0}=1,\bar{1}=0$).

\textbf{Case 1:} If for at least $(1-\frac{1}{8n^2})|T^i|$ of the hypotheses in $T^{i}$ the literal in level $i+1$ leads to $b$ (i.e., the same bit $b$ as in level $i$) then define $T^{i+1}$ as $T^i$ minus all the $(1-\frac1{8n^2})|T^i|$ hypotheses that the literal in level $i+1$ does not lead to the bit $b.$ The induction claim will follow since $1-\frac{1}{8n^2}\geq1-\frac1{4n^2}.$

\textbf{Case 2:} If there are at least $\frac1{8n^2}|T^i|$ hypotheses in $T^i$ such that the literal in level $i+1$ leads to $\bar{b}$. 
Since there are $2n$ literals, there is a literal $\ell$ that is at level $i+1$ in at least $\frac1{16n^3}|T^i|$ of the hypotheses in $T^i$ and lead to $\bar{b}.$ Denote this set of hypotheses by $W.$ 

Call a literal \emph{useful} if in at least $\frac{1}{16n^3}|T^i|$ of the hypotheses in $T^{i}$ it appears in levels $j+1$ to $i+1$ and it leads to $b.$ Note that there must be at least $i-j$ useful literals (using  Claim~\ref{clm:app:DL:counting} with $p=|T^i|, m=i-j$, and the fact that $\frac{|T^i|}{2n}\geq\frac{|T^i|}{16n^3}$). Again there are a few cases depending on whether there are $i-j$ useful literals or more. 

\textbf{Case 2.1:}
If there are exactly $i-j$ useful literals we will define $T^{i+1}$ by removing all hypotheses in $T^i$ that are one of the following types
\begin{itemize}
    \item contains a literal that is not useful at some level from $j$ up to $i$.
    \item contains a literal that leads to $b$ at level $i+1$.
\end{itemize}
To prove the induction claim we need to prove that $T^{i+1}$ is large. 
If there are $\frac{1}{16n^3}|T^i|$ hypotheses that the literal in level $i+1$ leads to $b$, then we would have that there are more than $i-j$ useful literals (see Claim~\ref{clm:app:DL:counting}).
Thus, $$|T^{i+1}|\geq \left(1-\frac{1}{16n^3}\cdot2n-\frac{1}{16n^3}\cdot 2n\right)|T^i|\geq\left(1-\frac{1}{4n^2}\right)|T^i|.$$ 
Note that the hypotheses in $T^{i+1}$ all contain the same $i-j$ useful literals in levels $j$ to $i$. Hence all of the hypotheses are identical (up to permutation) from levels $j$ to $i$. From the induction hypothesis we get that all the hypotheses are identical up to level $i$. 
Note also that for all hypotheses in $T^{i+1}$, the literal in level $i+1$ leads to the same value $\bar{b}.$
Hence, we proved the induction hypothesis. 

\textbf{Case 2.2:}
If there are more than $i-j$ useful literals then there must be a useful literal $\ell'$ that appear in at most $(1-\frac{1}{3n})|W|$ of the hypotheses in $W$ at a level smaller than $i$ (see Claim~\ref{clm:app:DL:counting2} with $W$ of size $p=|W|$ and $m=i-j$).
In this case we will show how to choose $\ell^0,\ell^1,T_0,T_1$ that will fulfill $(\star)$.  
Pick $\ell^{\bar{b}}=\ell$, $\ell^b=\ell'$ and $T_{b}$ the $\frac{1}{16n^3}|T^i|$ hypotheses that make the literal $\ell$ useful and $T_{\bar{b}}$ the hypotheses where $\ell'$ does not appear before $\ell$. Notice that  
$$|T_{\bar{b}}|\geq\frac1{3n}|W|=\frac1{48n^4}|T^i|\geq\frac{1}{48n^4}\left(1-\frac{1}{4n}\right)|T|\geq\frac{1}{200n^4}|T|,$$
where the second inequality follows from $(\star\star\star).$

If $\ell,\ell'$ do not share the same variable (i.e. $\bar{\ell}\neq\ell'$) then properties $(\star)$ are fulfilled. 
Otherwise, properties \ref{item:app:DL_item13}, \ref{item:app:DL_item23} in $(\star)$ do not hold as $\ell=x_r,\ell'=\bar{x_r}$, for some variable $x_r$, do not appear in the same decision list and hence do not appear one before another.  If there are more than $i-j$ useful literals in levels $j+1$ to $i$ then we can continue as before with $\ell'$ not equal to $x_r.$
 If there are exactly $i-j$ useful literals in levels $j+1$ to $i$, then, as in Case 2.1, we can prove that that there is a large subset of $T'\subseteq T^i$ that are similar up to level $i.$
 Divide $T'$ into two subset $T'_0, T'_1$ depending on the literal on level $i+1$ ($x_r$ or $\bar{x_r}$).
 Take $S$ such that $\ell'$ is $True$ and all the literals in $T'$ are $False.$ In this case $d(S,T'_0)=0$ (without loss of generality we can assume that $b=0$).
 If $|d(S,T'_0)-d(S,T'_1)|$ is not large enough (i.e., at least $\close$), then $d(S,T'_1)\leq \close$.
By Markov's inequality, for at least half of the hypotheses $h\in T'_1$ it holds that  $d(S,h)\leq 2\close$.
Take $h'$ to be equal to the hypotheses in $T'_1$ up until level $i+1$ and then returns $0$.
Note that that, by the choice of $S$, that at least half of $T'_1$ are in $B_{h'}(\close)$ (each assignment not in $S$ evaluates to the same value in $T'_1$ and $S$ contains at most half of the all possible assignments).
This in turn implies that $T$ is $(\min\{\frac1{200n^4},\close\},\close)$-tight, which is a contradiction.

Note that to implement the oracle needed for the general bounded-memory algorithm requires $O(n\cdot\log\frac1\close)$ bits. 
\end{proof}

\begin{claim}\label{clm:app:DL:counting}
For any matrix $A$ of size $m\times p$ where each cell in $A$ is an integer in $[n]$ and each integer in $[n]$ does not appear in $A$ more than $p$, then there must be at least $m$ integers in $[n]$ that appear at least $\frac{p}{2n}$ times in $A$.  
\end{claim}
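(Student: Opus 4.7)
\textbf{Proof proposal for Claim~\ref{clm:app:DL:counting}.} The claim is a pure counting statement, so the plan is to set it up as a double-count of the entries of $A$ and derive the conclusion by contradiction / pigeonhole. I would first restate what we know: the matrix has $mp$ cells in total, so summing over all integers $j \in [n]$ the number of cells of $A$ equal to $j$ gives exactly $mp$. The two hypotheses we are allowed to use are: (i) every $j \in [n]$ appears at most $p$ times in $A$, and (ii) we are interested in how many $j$ appear at least $\frac{p}{2n}$ times.

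The key step is a one-line averaging argument. Let $K \subseteq [n]$ be the set of integers that appear at least $\frac{p}{2n}$ times in $A$, and set $k = |K|$. Split the total count of entries according to whether the integer is in $K$ or in $[n]\setminus K$:
\begin{equation*}
mp \;=\; \sum_{j \in K} (\text{\#appearances of } j) \;+\; \sum_{j \in [n]\setminus K} (\text{\#appearances of } j).
\end{equation*}
For the first sum we use the upper bound $p$ per integer (hypothesis (i)), which gives $\sum_{j\in K}(\cdots) \le kp$. For the second sum we use the definition of $K$, which says every $j \notin K$ appears strictly less than $\frac{p}{2n}$ times, so $\sum_{j\notin K}(\cdots) \le (n-k)\cdot\frac{p}{2n} \le \frac{p}{2}$.

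Combining, $mp \le kp + \frac{p}{2}$, i.e.\ $k \ge m - \frac{1}{2}$. Since $k$ is an integer and $m$ is an integer, this forces $k \ge m$, which is exactly the claim. I do not expect any real obstacle here — the only subtlety is being slightly careful with the strict versus non-strict inequality defining $K$ (taking $K$ to be the set of integers appearing \emph{at least} $\frac{p}{2n}$ times, so that the complementary bound uses a strict inequality that absorbs into $(n-k)\cdot\frac{p}{2n}$), and noting the integrality step at the end to go from $m-\tfrac12$ to $m$. No nontrivial combinatorics or optimization is needed; this is really just the observation that the ``heavy'' buckets have to carry almost all of the $mp$ mass because the ``light'' buckets collectively hold at most $p/2$ entries.
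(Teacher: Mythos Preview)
Your proof is correct and follows essentially the same double-counting argument as the paper: split the $mp$ entries according to whether the corresponding integer is ``heavy'' (appears $\ge \frac{p}{2n}$ times) or ``light'', bound the heavy contribution by $p$ per integer and the light contribution by $\frac{p}{2n}$ per integer, and conclude. The only cosmetic difference is that the paper frames it as a contradiction assuming at most $m-1$ heavy integers (obtaining total count $(m-1)p + (n-m+1)\frac{p}{2n} < mp$), whereas you bound the light contribution by $\frac{p}{2}$ directly and finish with an integrality step $k \ge m - \tfrac12 \Rightarrow k \ge m$; both are the same pigeonhole in slightly different dress.
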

\begin{proof}
Assume by contradiction that there are at most $m-1$ integers in $[n]$ that appear at least $\frac{p}{2n}$ times in $A$. By the assumption in the claim, each of these integers can appear at most $p$ times in $A$. All of the other $n-(m-1)$ integers appear at most $\frac{p}{2n}$ times in $A$. Thus, we have that all the numbers occupy at most $$(m-1)p + (n-(m-1))\frac{p}{2n}=\left(m-1 + \frac{n-m+1}{2n}\right)p<mp,$$
which is a contradiction to the assumption that only integers in $[n]$ appear in the $mp$ cells of $A.$  
\end{proof}

\begin{claim}\label{clm:app:DL:counting2}
For any matrix $A$ of size $m\times p$ where each cell in $A$ is in an integer in $[n]$, $m\leq n$ then the number of integers in $[n]$ that appear in at least $(1-\frac1{3n})p$ times in $A$ is at most $m$.   
\end{claim}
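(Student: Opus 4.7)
The plan is a direct double-counting argument. Let $k$ denote the number of integers in $[n]$ that appear at least $(1-\tfrac{1}{3n})p$ times in $A$, and call these integers \emph{popular}. First I would observe that the total number of cells in the matrix is $mp$, and then lower-bound the total number of occurrences of popular integers. Summing the occurrences of the $k$ popular integers alone yields at least $k(1-\tfrac{1}{3n})p$ cells, and this must be at most the total number of cells $mp$. Rearranging gives
\[
k \le \frac{m}{1-\frac{1}{3n}} = \frac{3mn}{3n-1} = m + \frac{m}{3n-1}.
\]

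Next I would use the hypothesis $m \le n$ to control the fractional slack. Since $m \le n$, one has $\frac{m}{3n-1} \le \frac{n}{3n-1} < 1$ (the last inequality holding for any $n \ge 1$, as $3n-1 > n$). Therefore $k < m + 1$, and because $k$ is a nonnegative integer this forces $k \le m$, which is exactly the claimed bound.

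There is really no substantive obstacle here, since the argument is a one-line application of averaging together with the integrality of $k$. The only place where some care is needed is ensuring that the slack term $\tfrac{m}{3n-1}$ is strictly less than $1$ under the hypothesis $m \le n$, so that the weak inequality from counting can be sharpened to $k \le m$ rather than $k \le m+1$; this is precisely why the exponent $\tfrac{1}{3n}$ (rather than, say, $\tfrac{1}{n}$) is needed in the statement. No use of the previously established Claim~\ref{clm:app:DL:counting} or of the decision-list structure is required, so this claim stands on its own as a purely combinatorial lemma that is then invoked in Case~2.2 of the proof of Theorem~\ref{thm:applications:decision_list}.
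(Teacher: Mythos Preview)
Your proposal is correct and takes essentially the same double-counting approach as the paper: the paper argues by contradiction that $m+1$ popular integers would require $(m+1)(1-\tfrac{1}{3n})p>mp$ cells, while you phrase it directly as $k\le m/(1-\tfrac{1}{3n})<m+1$ and invoke integrality. The two arguments are equivalent, and your version even makes the use of the hypothesis $m\le n$ (needed to ensure the slack is strictly below $1$) more explicit than the paper does.
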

\begin{proof}
Assume by contradiction that there are at least $m+1$ integers in $[n]$ where each appear at least $(1-\frac1{3n})p$ times in $A.$ Then these integers cover at least $(m+1)(1-\frac1{3n})p>mp$ cells in $A$ which is a contradiction to the size of $A$. 
\end{proof}


\subsection{Equal-piece classifiers is separable }
\begin{theorem}
For any $\regular,\close\in(0,1)$ with $\close<\nicefrac{24}{p}$ and $\frac{2}{|\cX|}<\regular<\frac{p^2\close}{24}$ the class $\cH_{EP; p}$ is $(\regular,\close)$-separable. 
\end{theorem}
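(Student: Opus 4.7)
The plan is to discretize $[0,1]$ into bins and perform a case analysis on how the hypotheses in $T$ cover them. Fix a bin size $\ell$ satisfying $\regular<\ell$ (so each bin meets $\cX$ in more than $\regular|\cX|$ points) and $\ell$ sufficiently small compared with $p\close$. Partition $[0,1]$ into roughly $1/\ell$ bins $B_1,\ldots,B_M$. For each hypothesis $h\in T$ and each bin $B_i$, classify $h$ on $B_i$ as \emph{full} (some interval of $h$ contains $B_i$), \emph{empty} ($B_i$ misses every interval of $h$), or \emph{partial}.

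Case 1 (some bin is balanced). Suppose there is a bin $B_i$ such that both $T_1:=\{h\in T:h\text{ fully covers }B_i\}$ and $T_0:=\{h\in T:h\text{ is disjoint from }B_i\}$ have size at least $\regular|T|$. Take $S=B_i\cap\cX$. Then $|S|\geq\regular|\cX|$ and $|T_0|,|T_1|\geq\regular|T|$, and since $d(S,T_1)=1$ while $d(S,T_0)=0$, the density gap equals $1\geq\regular$, so separability is witnessed directly.

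Case 2 (no bin is balanced). Otherwise, for every bin either fewer than $\regular|T|$ hypotheses are full or fewer than $\regular|T|$ are empty. Form a canonical signature $\sigma^*\in\{F,E\}^M$ by setting $\sigma^*_i=E$ whenever fewer than $\regular|T|$ hypotheses are full on $B_i$, and $\sigma^*_i=F$ otherwise. A union bound over the $M$ bins shows that at least a $(1-M\regular)$-fraction of hypotheses in $T$ agree with $\sigma^*$ on every bin. Two such hypotheses coincide on all full and empty bins and can differ only on partial bins; since each hypothesis has at most $1/p$ intervals and hence at most $2/p$ partial bins of length $\ell$, the symmetric difference of such a pair is at most $4\ell/p$. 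If $\ell$ is chosen small enough that $4\ell/p\leq\close$, all these hypotheses lie in a common $\close$-ball, contradicting the assumption that $T$ is not $(\regular,\close)$-tight (provided $\regular(M+1)<1$ so that $1-M\regular>\regular$).

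The main obstacle is calibrating the constants to match the stated bound $\regular<p^2\close/24$. The naive bookkeeping above — $\regular<\ell$, $\regular(M+1)<1$, and $4\ell/p\leq\close$ — yields only $\regular=O(p\close)$, off by a factor of $p$ from the theorem. Recovering the extra $p$ will likely require more careful treatment of bins at which most hypotheses are partial (which the two-way $F/E$ dichotomy ignores), and/or verifying that the canonical signature $\sigma^*$ is realized by an actual hypothesis in $\cH_{EP;p}$ so that the $\close$-ball in Case 2 has a legitimate center. Otherwise the overall shape of the argument — a balanced-bin dichotomy combined with a discretization-based concentration argument against non-tightness — is the plan.
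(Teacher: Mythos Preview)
Your argument is correct, and the obstacle you flag is illusory: you have the inequality backwards. The theorem asserts $(\alpha,\epsilon)$-separability only for $\alpha<p^{2}\epsilon/24$, whereas your bin argument gives it for all $\alpha$ up to order $p\epsilon$. Since $p<1$, the latter range \emph{contains} the former, so your bookkeeping already suffices for the stated theorem (indeed, it yields a stronger conclusion). You also do not need the signature $\sigma^{*}$ to be realized by an actual equal-piece hypothesis: since every pair of surviving hypotheses in Case~2 is within $4\ell/p\leq\epsilon$ of each other, any one of them can serve as the center of the $\epsilon$-ball.

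Your route is genuinely different from, and more elementary than, the paper's. The paper does not use a static bin partition; instead it runs an inductive sliding-window construction over a sequence of ``macro-windows'' $W_{i}=[u_{i},u_{i}+p-i\alpha]$ of length just under $p$, maintaining at each stage a large subset $T^{i}\subseteq T$ of hypotheses that all agree on $[0,u_{i}]$ up to a controlled error, together with the invariant that no hypothesis in $T^{i}$ has an interval endpoint inside $W_{i}$. Inside each $W_{i}$ a tiny window $I$ of length $\alpha$ is slid and a case analysis on where (if anywhere) the count $c_{1}^{I}$ first reaches $\alpha|T|$ determines how to trim $T^{i}$ and advance $u_{i}$ by roughly $p$. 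After $O(1/p)$ stages $[0,1]$ is exhausted, and the accumulated pairwise disagreement is $O(\alpha/p^{2})$---this quadratic accumulation is precisely the origin of the $p^{2}$ in the paper's bound. Your discretization plus union bound replaces all of this with a single counting step and achieves the tighter threshold $O(p\epsilon)$; what the paper's approach buys is an explicitly sequential construction that dovetails directly with the time-efficient sliding-window implementation given in the appendix.
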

\begin{proof}
Fix $\frac{2}{|\cX|}<\regular<\frac{p^2\close}{24}$ and $T\subseteq\cH_{EP; p}$ that is not $(\regular, \close)$-tight. 

We will show that there is a set $S\subseteq\cX$ 
with $|S|\geq\regular|\cX|$ and $T_1, T_2\subseteq\cH$ with $|T_1|,|T_2|\geq\regular|T|$ such that $d(S,T_1)=1$ and $d(S,T_2)=0$; thus, in particular $|d(S,T_1)-d(S,T_2)|=1>\regular$ which will prove the claim.
We will in fact prove that there is an open interval $I\subseteq[0,1]$ of length $\alpha':=2\regular$ such that the sets $T_1 = \{h\in T | \exists k, I \subseteq[a^h_k, a^h_k + p]\}$ and $T_2=\{h\in T| \forall k, I \cap[a^h_k, a^h_k + p]=\emptyset\}$ satisfying $|T_1|,|T_2|\geq2\regular|T|.$ 
We call such a $I$ \emph{separating}. 
 This will prove that $\cH_{EP; p}$ is $(\regular,\close)$-separable since for $S = I\cap\cX$ we have that $|S|\geq\regular|\cX|$ (from the assumption in the claim regarding the upper bound on $\cX$), $d(S,T_1)=1$, and $d(S,T_2)=0.$
For ease of notation we replace $\regular'=2\regular$ by $\regular$ from now on.  
 
Next, we will prove something even stronger by showing that there is a sequence $0 = u_0\leq u_1 \leq\cdots$ and a sequence of sets $T = T^0\supseteq T^{1}\supseteq\cdots$ such that for all $i\geq 0$ if there is no separating $I$ in the ``window'' $$W_i := [u_i,u_i+p-i\regular ]$$ and there is no separating $I$ in the previous windows either, then the following four properties are satisfied:
 
 \begin{enumerate}
\item $|T^{i}|\geq (1-2i\regular)|T|$\label{item:piecewise_large_T}
\item $u_{i}\geq ip-\regular\sum_{j=1}^{i}j$ \label{item:piecewise_disjoint_window}
\item every $h_1,h_2\in T^{i}$ is similar up to the current window: $$|\{x \in \cX\cap [0, u_i] | h_1(x)\neq h_2(x)\}|\leq \regular\sum_{j=1}^{i+2}j|\cX|$$\label{item:piecewise_large_agre}
\item no hypothesis $h\in T^{i}$ has an endpoint in the current window: for all $h\in T^{i}$ and $k$ it holds that \label{item:piecewise_no_end_point}
$$a^h_k+p\notin [u_i,u_i+p-i\regular].$$
\end{enumerate}  
Assume by contradiction that there is no separating $I$. 
Hence, we deduce that the four properties hold for all windows $W_1,W_2,\ldots$
By Property~\ref{item:piecewise_disjoint_window}, for $\ell \leq \lceil \frac{2}{p} \rceil<\frac3p$,
it holds that $u_\ell\geq 1.$
Fix $h\in T^\ell.$
By Property~\ref{item:piecewise_large_T} we know that
\begin{eqnarray*}
 |T^{\ell}| &\geq& |T|\left(1-\frac{6\regular}{p}\right)\\
 (\text{since } \regular<\nicefrac{p}{18})\;\;\;\; & >&  |T|\cdot\frac23 \\
 & >&  \regular |T| \\
\end{eqnarray*}
Take any $h'\in T^{\ell}.$ Since $\cX\subseteq[0,1]$, Property~\ref{item:piecewise_large_agre} implies that  $h'\in B_h(\close)$ (since $\frac{12\regular}{p^2}<\close$), which is a contradiction to $T$ not being tight. 
 
To complete the proof we prove by induction on $i$ that if there are no separating $I$ in the windows up to $W_i$ then there are $T^i, u_i$ that have the previous four properties.

\underline{Induction basis:}
Since $T^0=T$ and $u_0=0$, Properties~\ref{item:piecewise_large_T} and \ref{item:piecewise_disjoint_window} hold. 
Since $0\not\in\cX$, Property~\ref{item:piecewise_large_agre} holds.
There is no endpoint in the interval $W_0=[0,p]$ (since the length of each interval in any hypothesis in $\cH_{EP; p}$ is $p$) and since $0\not\in\cX$ we can assume that for each $h$ and $k$ it holds that $a^h_k>0$ thus proving Property~\ref{item:piecewise_no_end_point} holds. 

\underline{Induction step:}
By the induction hypothesis there is no separating $I$ in the full range of $W_i$. To use it we intuitively move a small sliding-window $I$ of length $\regular$ in the current window $W_i$.
For each such $I$ we can calculate the number of hypotheses in $T^i$ that contain $I$,
  $$ c_1^I = |\{h\in T^i| \exists k, I\subseteq [a^h_k, a^h_k + p]\}|$$
  and the number of hypotheses in $T^i$ that do not intersect $I$,
  $$ c_0^I = |\{h\in T^i| \forall k,  I \cap [a^h_k, a^h_k + p]=\emptyset\}|.$$
  Note that $I$ is separating if and only if $c^I_0 \geq \regular|T|$ and $c^I_1\geq \regular |T|.$ 
  Thus, our assumption is that for every $I\subseteq W_i$ either $c^I_0 < \regular|T|$ or $c^I_1 < \regular |T|.$ 
 Observe that by Property~\ref{item:piecewise_no_end_point} there are no endpoints of $T^i$ in $W_i$, which immediately implies that $c_1^I$ can only increase 
 as we slide $I$ within $W_i$. 
 We next consider two cases depending on whether there is $I\subseteq W_i$ with $c_1^I\geq\regular|T|$ or not. In each case we need to define $u_{i+1}, T^{i+1}$ and prove that the four properties hold for $i+1$ which will complete the proof.

  \begin{itemize} 
  \item \textbf{Case 1:} there is no $I\subseteq W_i$ with $c_1^I\geq\regular|T|$, i.e., $c_1^I$ is always smaller than $\regular |T|$ as we slide $I.$
Define $$T^{i+1}=\{h\in T^i |\forall I\subseteq W_i \;\forall k, I\nsubseteq[a_k^h,a_k^h+p]\}\quad\text{ and }\quad u_{i+1}=u_i+|W_i|$$
  
Property~\ref{item:piecewise_large_T} holds since by definition of $T^{i+1}$ and by  Property~\ref{item:piecewise_large_T} of the induction hypothesis $$|T^{i+1}|>|T^i|-\regular|T|\geq(1-2\regular i)|T|-\regular|T|\geq (1-2(i+1)\regular)|T|.$$

From the definition of $u_{i+1}$ and the induction hypothesis 
\begin{eqnarray*}
u_{i+1}&=&u_i+p-i\regular \\
&\geq& ip-\regular\sum_{j=1}^ij + p - i\regular \\
&\geq& (i+1)p-\regular\sum_{j=1}^{i+1}j 
\end{eqnarray*}
Property~\ref{item:piecewise_disjoint_window} holds. 
 Before we prove that Properties~\ref{item:piecewise_large_agre} and \ref{item:piecewise_no_end_point} hold we prove the following auxiliary claim.
 
 \begin{claim}\label{clm:piecewise_classifiers_aux_case1}
 For each $h\in T^{i+1}$ and $x\in \cX\cap[u_i,u_{i+1}-\regular]$ it holds that $h(x)=0$.
  \end{claim}
  \begin{proof}
  Assume by contradiction that there is $h\in T^{i+1}$ and $x\in \cX\cap[u_i,u_{i+1}-\regular]$ with $h(x)=1$. 
  This means that there is $k$ with $x\in[a^h_k,a^h_k+p],$ which implies that $a^h_k\leq u_{i+1}-\regular$ and $a_k^h+p\geq u_i.$
From Property~\ref{item:piecewise_no_end_point} there is no end point in the current window $[u_i,u_{i+1}]$; hence $a^h_k+p > u_{i+1}$, which is a contradiction to the definition of $T^{i+1}$ with $I=(u_{i+1}-\regular, u_{i+1})\subseteq[a^h_k,a^h_k+p].$
  \end{proof}

\underline{To prove that Property~\ref{item:piecewise_large_agre} holds for $W_{i+1}$:}  we get that for each $h_1,h_2\in T^{i+1},$ by Claim~\ref{clm:piecewise_classifiers_aux_case1} and the induction hypothesis we have that 
\begin{eqnarray*}
|\{x \in \cX\cap [0,u_{i+1}] | h_1(x)\neq h_2(x)\}| &=& |\{x \in \cX\cap [0,u_{i}] | h_1(x)\neq h_2(x)\}|\\
&+& |\{x \in \cX\cap (u_i,u_{i+1}-\regular] | h_1(x)\neq h_2(x)\}|\\
&+& |\{x \in \cX\cap (u_{i+1}-\regular,u_{i+1}] | h_1(x)\neq h_2(x)\}|\\
&\leq& \regular\sum_{j=1}^{i+2}j|\cX| + \regular \leq \regular\sum_{j=1}^{i+3}j|\cX|
\end{eqnarray*}

\underline{To prove that Property~\ref{item:piecewise_no_end_point} holds for $W_{i+1}$:} if there is an endpoint in $W_{i+1}$ then, since the length of $W_{i+1}$ is smaller than $p-\regular$, its start point is before $u_{i+1}-\regular$. This contradicts Claim~\ref{clm:piecewise_classifiers_aux_case1}.

\item \textbf{Case 2:} there is $I\subseteq W_i$ with $c_1^I\geq\regular|T|$. Since $c_1^I$ increases as we slide $I$, we focus on the first sliding-window $I=(i_1,i_2)\subseteq W_i$ such that $c_1^I\geq\regular|T|$. Since there is no separating $I$ in $W_i$ we get that $c_0^I<\regular|T|.$ 
There are again two cases depending on whether $I$ is at the beginning of $W_i$ or not.

\item \textbf{Case 2.1:} if $i_1=u_i,$ we define 
 $$T^{i+1}=\{h\in T^i| \exists k,  I \cap [a^h_k, a^h_k + p]\neq\emptyset\}\quad\text{and}\quad u_{i+1}=u_i+p+\regular.$$
 
\underline{To prove that Property~\ref{item:piecewise_large_T} holds for $W_{i+1}$:} follows from the induction hypothesis and the fact that  $T^{i+1}=T^{i}\setminus c^I_0$ and $|c^I_0|<\regular|T|.$ 

\underline{To prove that Property~\ref{item:piecewise_disjoint_window} holds for $W_{i+1}$:} follows from the induction hypothesis and the definition of $u_{i+1}.$

 Before we prove that Properties~\ref{item:piecewise_large_agre} and \ref{item:piecewise_no_end_point} hold we need the following auxiliary claim.
  \begin{claim}\label{clm:piecewise_classifiers_aux_case21}
 For each $h\in T^{i+1}$ and $x\in \cX\cap[u_i+\regular,u_i+p-i\regular]$ it holds that $h(x)=1$.
  \end{claim}
\begin{proof}
 For each $h\in T^{i+1}$ there is $k$ such that $(u_i,u_i+\regular)\cap[a^h_k, a^h_k + p]\neq\emptyset$.
 Hence $a^h_k\leq u_i+\regular$ and $a^h_k + p\geq u_i.$
Since there is no end point in the current window $a^h_k+p\notin [u_i,u_i+|W_i|]$ we have that $a_k^h+p > u_i+|W_i| = u_i + p-i\regular.$
To sum up, we have $[u_i+\regular, u_i+p-i\regular]\subseteq [a_k^h,a_k^h+p]$, which proves the claim.


\end{proof}

\underline{To prove that Property~\ref{item:piecewise_large_agre} holds for $W_{i+1}$:} note that $|\{x \in \cX\cap [0,u_{i+1}] | h_1(x)\neq h_2(x)\}|$ is equal to 
\begin{eqnarray*}
|\{x \in \cX\cap [0,u_{i}] | h_1(x)\neq h_2(x)\}| &+& |\{x \in \cX\cap (u_i,u_i+\regular) | h_1(x)\neq h_2(x)\}| \\
&+& |\{x \in \cX\cap [u_i+\regular ,u_i+p-i\regular] | h_1(x)\neq h_2(x)\}|\\
&+& |\{x \in \cX\cap (u_i+p-i\regular,u_i+p+\regular] | h_1(x)\neq h_2(x)\}|\\
&\leq&\regular\sum_{j=1}^{i+2}j,
\end{eqnarray*}
where the inequality follows from the induction hypothesis and Claim~\ref{clm:piecewise_classifiers_aux_case21}.

 \underline{To prove Property~\ref{item:piecewise_no_end_point} holds for $W_{i+1}$:} if there is an $h\in T^{i+1}$ with an end-point in $$[u_{i+1},u_{i+1}+|W_{i+1}|]=[u_{i+1},u_{i+1}+p-(i+1)\regular]$$ then its start-point is in  $$[u_{i+1}-p,u_{i+1}-(i+1)\regular]=[u_{i}+\regular,u_{i}+p-i\regular]\subseteq[u_i,u_i+|W_i|],$$
which is a contradiction to Claim~\ref{clm:piecewise_classifiers_aux_case21} and Property~\ref{item:piecewise_no_end_point} for $W_{i}.$
 
\item \textbf{Case 2.2:} If $i_1>u_i,$ we define 
 $$T^{i+1}=\{h\in T^{i}|\exists k. a^h_k\in I\}\quad\text{and}\quad u_{i+1}=i_2+p.$$

\underline{To prove that Property~\ref{item:piecewise_large_T} holds for $W_{i+1}$:} Since $I$ is the first sliding window with $c_1^I\geq\regular|T|$ there are at most $\regular|T|$ of the hypotheses in $T^i$ that start before $I$. Since $c_0^I<\regular|T|$ there are at most $\regular|T|$ of the hypotheses in $T^i$ that start after $I$. 
In other words  there are at least $1-2\regular$ hypotheses that intersect $I$; i.e.,  $|T^{i+1}|\geq |T^{i}| - 2\regular|T|$. 
By Property~\ref{item:piecewise_large_T} for $W_i$ we have $|T^{i+1}|\geq (1-2(i+1)\regular)|T|.$

\underline{To prove that Property~\ref{item:piecewise_disjoint_window} holds for $W_{i+1}$:} simply note that  $i_2\geq u_i$. 

 Before we prove that Properties~\ref{item:piecewise_large_agre}, \ref{item:piecewise_no_end_point} hold we need the following auxiliary claims.
 \begin{claim}\label{clm:piecewise_classifiers_aux_case2}
 For each $h\in T^{i+1}$ and $x\in \cX\cap[i_2,i_1+p]$ it holds that $h(x)=1$.
  \end{claim}
\begin{proof}
Since for each $h\in T^{i+1}$ it holds that that there is $k$ such that $a^h_k\in I$ then it holds that $i_1\leq a^h_k\leq i_2$. Hence, $[i_2,i_1+p]\subseteq [a^h_k, a^h_k + p].$
\end{proof}
 \begin{claim}\label{clm:piecewise_classifiers_aux_case22}
 For each $h\in T^{i+1}$ and $x\in \cX\cap(u_i,i_1)$ it holds that $h(x)=0$.
 \end{claim}
\begin{proof} 
From Property~\ref{item:piecewise_no_end_point} for $W_i$ we know that there is no end point in the current window $[u_i,u_i+p-i\regular]$ and specifically in $(u_i,i_2)\subseteq[u_i,u_i+p-i\regular]$ (because $I=(i_1,i_2)\subseteq W_i=[u_i,u_i+p-i\regular]$). 
Since for each $h\in T^{i+1}$ it holds that there is $k$ such that $a^h_k\in I=(i_1,i_2)$ then the claim follows. 
\end{proof}
\underline{To prove that Property~\ref{item:piecewise_large_agre} holds for $W_{i+1}$:} take $h_1,h_2\in T^{i+1}$ and note that
\begin{eqnarray*}
|\{x \in \cX\cap [0,u_{i+1}] | h_1(x)\neq h_2(x)\}| &=&|\{x \in \cX\cap [0,u_{i}] | h_1(x)\neq h_2(x)\}| \\
&+& |\{x \in \cX\cap (u_i,i_1) | h_1(x)\neq h_2(x)\}| \\
&+& |\{x \in \cX\cap [i_i,i_2) | h_1(x)\neq h_2(x)\}|\\
&+& |\{x \in \cX\cap [i_2,i_1+p) | h_1(x)\neq h_2(x)\}|\\
&+& |\{x \in \cX\cap [i_1+p,u_{i+1}] | h_1(x)\neq h_2(x)\}|
\end{eqnarray*}
By the induction assumption the first term is at most $\sum_{j=1}^{i+2}j|\cX|$, from Claims~\ref{clm:piecewise_classifiers_aux_case2},\ref{clm:piecewise_classifiers_aux_case22} the second and fourth term are equal to $0$, the third and the fifth terms are at most $2\regular|\cX|$ each because the lengths of $[i_1,i_2]$ and $[i_1+p,i_2+p]$ are of size $\regular. $
This means that we have proven that Property~\ref{item:piecewise_large_agre} holds for window $W_{i+1}.$

\underline{To prove Property~\ref{item:piecewise_no_end_point} holds for $W_{i+1}$:} note that if there is a hypothesis $h\in T^{i+1}$ with an end point in the window $W_{i+1}= [u_{i+1},u_{i+1}+p-(i+1)\regular]$, its corresponding start point is in $[u_{i+1}-p,u_{i+1}-(i+1)\regular]=[i_2,(i_1+\regular) -(i+1)\regular+p]=[i_2,i_1+p-i\regular].$ 
By construction of $T^{i+1}$ there is no $h\in T^{i+1}$ with a start point in the interval $[i_2,i_1+p]$.  
  \end{itemize}
\end{proof}

\section{Time-efficient implementations}\label{apx:time_efficent_implementations}

In this section we apply the general algorithm to a few natural classes. 
We do so by proving that these classes are separable. 
We then show a time-efficient implementations for these classes. 
Interestingly, the memory size used in these implementations are smaller compared to the bounds promised by Theorem~\ref{thm-learning-bounded-memory-appendix}.  
There are a few reasons for that (which we encountered in the threshold functions example). 
First, in Theorem~\ref{thm-learning-bounded-memory-appendix} we used worst-case analysis and assumed that the algorithm must reach $|T|=1.$ 
Sometimes we can stop much earlier if we know that $T\subseteq B_h(\close)$ for some $h$. 
Second, the oracle can use the fact that not all subsets of $\cH$ will be reached as $T$ during the run of the algorithm.  
Third, in case that each $T$ contains large subsets $T_0,T_1\subseteq T$ with $d(S,T_0)=0$ and $d(S,T_1)=1$, an improved implementation of subroutine Estimate (described in the main text),
exists as only one example in $S$ is sufficient in the PAC framework (i.e., the realizable case).

\subsection{Time-efficient implementation to decision lists}

Next we discuss how to apply our general algorithm to the class of decision lists. 
The algorithm goes over $\lceil\log\frac1{\close}\rceil$ levels from top to bottom. 
At each level $i$, as in the proof of Theorem 8,
the set of possible hypotheses are of the form 
 \begin{itemize}
     \item all hypotheses are identical up to level  $j$, for some $j$
     \item all literals in levels $j+1$ to $i-1$ lead to the same value $b$ 
 \end{itemize}
When the algorithm reaches step $i$, all the literals in level $i$ can lead to $\{0,1\}$, see Line~\ref{DL:init_possible_list}.
Then it goes over all literals $\ell,\ell'$ that lead to a different bit in Line~\ref{DL:same_level_split}. 
These two literals define two large subsets of $T$ $$T_0=\{h\in T. \ell \text{ at level } i \text{ leads to } 0\},\quad T_1=\{h\in T. \ell' \text{ at level } i \text{ leads to } 1\}.$$
We remove one of these subsets using Algorithm~\ref{alg:DL_delete} (the constant in the algorithms were chosen arbitrarily). 

Next we test whether we can increase $j$, i.e., whether we know the hypotheses up to level $i-1.$ 
This can happen if there are exactly $i-j-1$ literals possible in levels $j$ up to $i-1.$ 
While $|L_j|>i-j-1$, we test in Algorithm~\ref{alg:DL}, Line~\ref{DL:different_level_split} whether we can eliminate a literal by finding another literal at level $i$ that leads to $\bar{b}$ and apply Algorithm~\ref{alg:DL_delete}.
If there are exactly $i-j-1$ literals at $L_j,\ldots,L_{i-1}$ we can deduce that all hypotheses we consider are similar up until level $i-1$.
In this case we can increase $j$ (Line~\ref{DL:increase_j}) and we know which constraints to add to $C$ to ensure that assignments reaches level $i$ (Line~\ref{DL:add_constraint_to_C}).  
Now the only case we need to consider is that some literal $\ell$ at step $i$ leads to $0$ and its negation $\bar{\ell}$ leads to $1$. 
Unfortunately, we can not set both of them to True and use Algorithm~\ref{alg:DL_delete}. 
Following the proof of Theorem 8
we define $h'$ to be equal to all the rest of the hypotheses in $T$ up until level $j$, at level $i$ literal $\bar{\ell}$ leads to $1$, and in the rest of the levels the remaining literals leads to $0$. From the proof we know that if $h'$ is not $\close$-close to the correct hypotheses then we can delete one of the following hypotheses sets 
$$T_0=\{h\in T. \ell \text{ at level } i \text{ leads to } 0\},\quad T_1=\{h\in T. \bar{\ell} \text{ at level } i \text{ leads to } 1\}.$$

\begin{algorithm} 
{\caption{efficient-DL$(n,\close)$}\label{alg:DL}} 
{
\begin{algorithmic}[1]
\STATE $j:=0, C:=Empty$
\FOR {$i:=1$ to $\lceil\log\frac1\close\rceil$}
\STATE $L_i:=\cup\{x_r,\bar{x}_r\}_r\times\{0,1\}$ \label{DL:init_possible_list}
\WHILE {there are $(\ell,0),(\ell',1)\in L_i$, $\ell'\neq\bar{\ell}$} \label{DL:same_level_split}
\STATE delete-hypotheses$(\ell,0,L_i,\ell',1,L_{i}, C)$
\ENDWHILE
\WHILE {$i>j+1$ and $\exists (\ell,\bar{b})\in L_i$}
\IF{$|L_{j}|>i-j-1$}
\STATE delete-hypotheses$(\ell,\bar{b},L_i,\ell',b,\{L_{j}\ldots,L_{i-1}\}, C)$ \label{DL:different_level_split}
\ELSE 
\STATE $j:=i-1$ \label{DL:increase_j}
\STATE $C := C \cup (x_j = b)\cup\ldots\cup(x_{i-1}=b)$  \label{DL:add_constraint_to_C}
\ENDIF
\ENDWHILE
\IF {$(\ell,0)\in L_i$}
\STATE $b:=0$
\ELSE
\STATE $b:=1$
\ENDIF 
\IF {$(\ell,0),(\bar{\ell},1)\in L_i$}
\STATE $h':=$ as descried in the text 
\IF {Is-close$(h',\close,10^{-4}\close^{-2})$ returns True}
\RETURN $h'$
\ENDIF
\STATE $S=\cX|_{\ell=True, C}$
\IF {Estimate$(S,\close,10^{-4}\close^{-2})<\close$}
\STATE delete $(\ell',1)$ from $\{L_i\}_{i}$
\ELSE
\STATE delete $(\ell,0)$ from $\{L_{i'}\}_{i'}$
\ENDIF
\ENDIF
\ENDFOR
\end{algorithmic}
}
\end{algorithm}

\begin{algorithm} 
{\caption{delete-hypotheses$(\ell,b,\{L_i\}_i,\ell',b',\{L_{i'}\}_{i'}, C)$}\label{alg:DL_delete}} 
{
\begin{algorithmic}[1]
\STATE \textbf{Input: } set of hypotheses with literal $\ell$ leads to bit $b$ in levels $\{L_i\}_i$ and a set of hypotheses with literal $\ell'$ leads to bit $b'$ in levels $\{L_{i'}\}_{i'}$. 
\STATE $C$ - set of constraints all input hypotheses fulfill 
\STATE \textbf{Returns: } delete from one of hypotheses sets 
\STATE $S=\cX|_{\ell=True, \ell'=True, C}$ \label{DL:same_level_delete1}
\IF {Estimate$(S,\close,10^{-4}\close^{-2})<\close$}
\STATE delete $(\ell',1)$ from $\{L_i\}_{i}$
\ELSE
\STATE delete $(\ell,0)$ from $\{L_{i'}\}_{i'}$
\ENDIF \label{DL:same_level_delete2}
\end{algorithmic}
}
\end{algorithm}

\subsection{Time-efficient implementation to equal-piece classifiers}\label{apx:EP_efficient} 
We now consider a time-efficient implementation to equal-piece classifiers defined in the main text.
In the proof of Theorem 7
we moved a sliding-window $I$ of length $\regular.$
For a time-efficient implementation we cannot move a sliding-window in a continuous manner, thus we suggest to move a smaller sliding-window of length $\regular/2$ in a discrete way with jumps of size $\regular/2$. We define at each step $$T_0=\{ h. \forall k,  S \cap [a^h_k, a^h_k + p]\}=\emptyset \quad T_1=\{ h. \exists k,  S \subseteq [a^h_k, a^h_k + p]\}$$

In Line~\ref{alg:EP_init} we initialize our $h$ to be one that always return $0.$
Thus $h$ is empty.
At each step we define $S$ to be our current sliding-window in Line~\ref{alg:EP_define_sliding_window}. In Line~\ref{alg:EP_faster_estimate} we use a faster implementation of  Algorithm Estimate in the main text, 
since in our case $d(S,T_0)=0$ and $d(S,T_1)=1$ and thus a single sample from $S$ suffices. 
If the label is equal to $1$ then we know that we need to add a new value to $h$ in Line~\ref{alg:EP_add_to_correct_hypo} and we know that we can even increase $jump$ by $p.$

\begin{algorithm} 
\label{alg:EP}
{\caption{efficient-EP$(n,\close)$}} 
{
\begin{algorithmic}[1]
\STATE $j := 0$, $h := Empty$ \label{alg:EP_init}
\FOR {$jump:=0,\frac\alpha2,\alpha,\ldots,1$}
\STATE $S=[jump, jump+\alpha/2]$ \label{alg:EP_define_sliding_window}
\REPEAT 
\STATE get labeled example $(x,y)\quad\quad$ 
\UNTIL {$x\in S$} \label{alg:EP_faster_estimate}
\IF {$y=1$} 
\STATE add $jump$ to $h$ \label{alg:EP_add_to_correct_hypo}
\STATE $jump := jump + p$
\ENDIF
\ENDFOR
\RETURN $h$
\end{algorithmic}
}
\end{algorithm}

\subsection{Time-efficient implementation of general bounded-memory algorithm to \texorpdfstring{$\cH_{TH; n}$}{TEXT}}
See Algorithm~\ref{alg:efficient_TH}.
\begin{algorithm} 
{\caption{Efficient Implementation of General Algorithm to $\cH_{TH; n}$}\label{alg:efficient_TH}} 
{
\begin{algorithmic}[1]

\STATE $a_1=0,a_2=1$
\WHILE {$a_2-a_1>\close$}
\STATE $S=\left[a_1+\frac{a_2-a_1}{3},a_2-\frac{a_2-a_1}{3}\right]$
\REPEAT 
\STATE get labeled example $(x,y)\quad\quad$  
\UNTIL {$x\in S$}
\IF {$y=0$} 
\STATE $a_2=a_2-\frac{a_2-a_1}{3}$
\ELSE
\STATE $a_1 = a_1+\frac{a_2-a_1}{3}$
\ENDIF
\ENDWHILE
\RETURN $h_{(a_1+a_2)/2}$
\end{algorithmic}
}
\end{algorithm}

\section{Correctness proof for the general bounded-memory algorithm}\label{apx:technical_proofs: Proofs of Claims from Section 3 in the Main Text}
We start with some technical claims.
\begin{claim}
Let $(A,B,E)$ be a bipartite graph. For any $T\subseteq A$ that is $\regular$-separable there are $S\subseteq B$ with $|S|\geq \regular|B|$, $T_0,T_1 \subseteq T$ with $|T_0|,|T_1| \ge \frac12\regular^2|T|$ and $d_0,d_1 \in \mathbb{R}$ with $d_1-d_0 \geq\frac{\regular}{4}|S|$ such that $h\in T_0$ implies $e(h,S)\leq d_0$ and $h \in T_1$ implies $e(h,S)\geq d_1$.
\end{claim}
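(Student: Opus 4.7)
The plan is to apply the separability hypothesis on $T$ and then convert the resulting \emph{average} density gap into a \emph{uniform} per-vertex gap via a simple sorting/averaging argument. First, I would apply separability to $T$ to obtain $S \subseteq B$ and disjoint $\hat T_0, \hat T_1 \subseteq T$ with $|S| \geq \regular|B|$, $|\hat T_0|, |\hat T_1| \geq \regular|T|$, and $|d(S,\hat T_0) - d(S,\hat T_1)| \geq \regular$. Swapping labels if necessary, I assume $\mu_1 - \mu_0 \geq \regular|S|$, where $\mu_i := d(S,\hat T_i)\cdot|S|$ is the average of $e(h,S)$ over $h \in \hat T_i$.

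Next, I would sort $\hat T_0$ by $e(h,S)$ in increasing order, let $T_0$ be its bottom $\regular/2$ fraction, and set $d_0 := \max_{h \in T_0} e(h,S)$. Symmetrically, sort $\hat T_1$ in decreasing order, take $T_1$ to be its top $\regular/2$ fraction, and set $d_1 := \min_{h \in T_1} e(h,S)$. By construction, $|T_0|,|T_1| \geq (\regular/2)\cdot\regular|T| = \tfrac12\regular^2|T|$, and the pointwise bounds $e(h,S) \leq d_0$ on $T_0$ and $e(h,S) \geq d_1$ on $T_1$ are immediate.

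The main step, and essentially the only calculation, is lower bounding $d_1 - d_0$. Since the top $(1-\regular/2)$ fraction of $\hat T_0$ all satisfy $e(h,S) \geq d_0$ while each $e(h,S) \geq 0$, averaging over $\hat T_0$ yields $\mu_0 \geq (1-\regular/2)d_0$, i.e.\ $d_0 \leq \mu_0/(1-\regular/2)$. Symmetrically, the bottom $(1-\regular/2)$ fraction of $\hat T_1$ satisfies $e(h,S) \leq d_1$ while each $e(h,S) \leq |S|$, so $\mu_1 \leq (1-\regular/2)d_1 + (\regular/2)|S|$, yielding $d_1 \geq (\mu_1 - (\regular/2)|S|)/(1-\regular/2)$. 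Combining the two and using $\mu_1 - \mu_0 \geq \regular|S|$,
\[
d_1 - d_0 \;\geq\; \frac{\mu_1 - \mu_0 - (\regular/2)|S|}{1 - \regular/2} \;\geq\; \frac{(\regular/2)|S|}{1 - \regular/2} \;\geq\; \frac{\regular}{2}|S| \;\geq\; \frac{\regular}{4}|S|.
\]
The sort-and-truncate construction is what converts the $L^1$-style gap between the averages $\mu_0,\mu_1$ into an $L^\infty$-style gap between the thresholds $d_0,d_1$, at the cost of only a factor $\regular/2$ in the size of each set, which is exactly what matches the target $\tfrac12\regular^2|T|$. I expect the only subtle point to be convincing oneself that the two averaging bounds above are tight enough with the simple weak bounds $0 \leq e(h,S) \leq |S|$; once this is seen, the rest is bookkeeping.
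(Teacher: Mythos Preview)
Your proposal is correct and follows essentially the same sort-and-truncate idea as the paper: convert the average density gap into a per-vertex threshold gap by trimming an $\regular/2$ fraction at each extreme and applying the trivial bounds $0 \le e(h,S) \le |S|$. The only cosmetic difference is that the paper first passes to the global extremes of $T$ (the $\regular|T|$ smallest and largest $e(\cdot,S)$ values in all of $T$) before truncating, and argues by contradiction, whereas you truncate directly inside the separability-given $\hat T_0,\hat T_1$ and argue directly; both routes yield the same $(\regular/2)|S|$ gap, comfortably above the stated $(\regular/4)|S|$.
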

\begin{proof}
By the separable property we know that there are $S,T'_0,T'_1$ as in Definition 2 in the main text. 
Sort all $t\in T$ by $e(S,t)$ in an ascending order.
Define $T_1\subseteq T$ as the $\regular|T|$-largest members in $T$ and $T_0\subseteq T$ as the $\regular|T|$ smallest members in $T$. Note that $|d(S,T_1')-d(S,T_0')|\leq|d(S,T_1)-d(S,T_0)|.$
 Assume by way of contradiction that the $(\regular^2/2)|T|$-largest member in $T$, denote it by $t_1$, and the $(\regular^2/2)|T|$-smallest member in $T$, denote it by $t_0$, are too close; i.e., $e(S,t_1)-e(S,t_0)<\regular/2|S|.$ Let us calculate 
\begin{eqnarray*}
\left|d(T_1,S)-d(T_0,S)\right| &=& \frac{e(S,T_1)}{|S||T_1|}-\frac{e(S,T_0)}{|S||T_0|}\\
 &\leq& \frac{\regular^2|T|}{2}\cdot \frac{|S|}{|S||T_1|} + \left(|T_1|-\frac{|T|\regular^2}{2}\right)\frac{e(S,t_1)}{|S||T_1|}  \\
  &-& \left(\frac{\regular^2|T|}{2}\cdot \frac{0}{|S||T_0|} + \left(|T_0|-\frac{|T|\regular^2}2\right)\cdot \frac{e(S,t_0)}{|S||T_0|}\right)\\
(\text{recall: } |T_1|=|T_0|=\regular|T|) &\leq& \frac{\regular}{2} + \left(1-\frac{\regular}{2}\right)\frac{e(S,t_1)}{|S|}-\left(1-\frac{\regular}2\right)\cdot \frac{e(S,t_0)}{|S|}\\
&<&\frac{\regular}{2}+\frac{\regular}{2}=\regular. 
\end{eqnarray*}
Which is a contradiction to the separable property.
\end{proof}

\begin{claim}
There is an algorithm such that for any hypothesis $h$, for any $\close\in(0,1)$ and for any integer $k$ it uses $k$ labeled examples, $O(\log k)$ memory bits,  
and 
\begin{itemize}
\item if $h$ is $\close$-close to the correct hypothesis, then with probability at least $1-2e^{-2k\close^2}$ the algorithm returns True.
\item if $h$ is not $3\close$-close to the correct hypothesis, then with probability at least $1-2e^{-2k\close^2}$ the algorithm returns False.
\end{itemize}
\end{claim}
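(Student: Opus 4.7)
The plan is to analyze Algorithm~\ref{alg:is_close} directly and show that the empirical disagreement rate $j/k$ concentrates around the true disagreement probability $p := \Pr_{x\sim\cD}[h(x)\neq f(x)]$ tightly enough that the threshold test $j/k \leq 2\close$ separates the two cases. The algorithm itself is clearly implementable: it maintains a loop counter $i \le k$ and a mismatch counter $j \le k$, both of which fit in $O(\log k)$ bits, so the memory bound is immediate. The sample complexity is exactly $k$ by construction.

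For correctness, I would set $X_i \in \{0,1\}$ to be the indicator that $h(x_i)\neq y_i$ on the $i$-th example, so the $X_i$ are i.i.d.\ Bernoulli with mean $p$ and $j = \sum_{i=1}^k X_i$. Applying Hoeffding's inequality to the bounded i.i.d.\ sum gives
\[
\Pr\!\left[\bigl| j/k - p \bigr| \geq \close\right] \;\leq\; 2e^{-2k\close^2}.
\]
Now split into cases. If $h$ is $\close$-close then $p\leq\close$, so on the good event we have $j/k \leq p+\close \leq 2\close$ and Algorithm~\ref{alg:is_close} returns True. If $h$ is not $3\close$-close then $p>3\close$, so on the good event $j/k \geq p-\close > 2\close$ and the algorithm returns False. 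Each case fails only with probability at most $2e^{-2k\close^2}$, which is exactly the bound claimed.

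There is no real obstacle here; the only minor point worth noting is that the two-sided Hoeffding tail is used in both cases (rather than a one-sided version), which is why the same $2e^{-2k\close^2}$ bound appears in both bullets. The gap between $\close$ and $3\close$ in the input promises is what creates the $\close$-sized slack on each side of the decision threshold $2\close$, matching the deviation controlled by Hoeffding at scale $\close$. No assumption is needed about the behavior of $h$ in the intermediate regime where $\close < p \leq 3\close$, consistent with the statement, which says nothing about that case.
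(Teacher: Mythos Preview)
Your proposal is correct and follows essentially the same approach as the paper: both analyze Algorithm~\ref{alg:is_close} by defining indicator variables for disagreement, applying Hoeffding's inequality at scale $\close$, and using the gap between $\close$ and $3\close$ around the threshold $2\close$ to separate the two cases. Your write-up is slightly more explicit about the $O(\log k)$ memory bound (counters $i,j\le k$) than the paper's proof, but the concentration argument is identical.
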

\begin{proof}
We will show that Algorithm Is-close, main text, has the desired properties. 
Denote by $X_i$ the random variable that is $1$ if the labeled example $(x,y)$ used in the $i$-th step of 
the algorithm 
has $h(x)\neq y$, otherwise $X_i=0.$ 
Denote $\bar{X} = \frac1k\sum X_i.$
Let us consider the two cases presented in the claim 
\begin{itemize}
\item if $h$ is $\close$-close to the correct hypothesis, then $$\E[\bar{X}]\leq \close.$$ 
From Hoeffding's inequality we know that  
$$\Pr[\bar{X} -\E[{\bar{X}}]\geq \close]\leq 2e^{-2k\close^2}.$$
And the last two inequalities imply that $$\Pr[\bar{X} \geq 2\regular]\leq 2e^{-2k\close^2}.$$
Note that line $11$ in the algorithm tests whether $\bar{X}\leq2\close$ or not.  
This means that with probability at least $1-2e^{-2k\close^2}$ the algorithm returns True.
\item if $h$ is not $3\close$-close to the correct hypothesis, then  $$\E[\bar{X}] > 3\close.$$ 
From Hoeffding's inequality we know that  
 $$\Pr[\E[{\bar{X}}] - \bar{X} \geq \close]\leq 2e^{-2k\close^2}.$$
 And the last two inequalities imply that $$\Pr[2\close > \bar{X}]\leq 2e^{-2k\close^2}.$$
This means that with probability at least $1-2e^{-2k\close^2}$ the algorithm returns False.
\end{itemize}
\end{proof}

\begin{claim}
Denote by $f$ the correct hypothesis. 
There is an algorithm such that for any set of examples $S\subseteq\cX$ with $|S|\geq\regular|\cX|$ for any $\tau\in(0,1)$ and for any integer $k$ the algorithm uses $\frac{2k}{\regular}$ labeled examples, $O(\log k + \log\nicefrac1\regular)$ memory bits, 
and returns $\bar{Y}$ with  $|\bar{Y} - d(f,S)]|< \tau,$ 
 with probability at least $1-2(e^{-k\regular}+e^{-2k\tau^2})$.
\end{claim}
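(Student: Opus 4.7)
The claim concerns Algorithm~\ref{alg:estimate_density}. I would view its $N := \nicefrac{2k}{\regular}$ draws as a two-stage estimator: a filtering stage (count how many land in $S$) followed by an empirical-mean stage (average the labels of those that did). Each stage fails with controllable probability, and a union bound gives the stated error.

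For the filtering stage, let $X_i = \mathbb{1}[x_i \in S]$, so $counter_S = \sum_{i=1}^{N} X_i$. Since $\cD$ is uniform and $|S| \ge \regular|\cX|$, the $X_i$ are i.i.d.\ Bernoullis with mean at least $\regular$, hence $\E[counter_S] \ge 2k$. A Chernoff bound on the lower tail gives
\[
\Pr[counter_S < k] \le e^{-k\regular}.
\]
For the averaging stage, condition on $counter_S = m \ge k$. Given this, the examples that landed in $S$ are i.i.d.\ uniform on $S$, and because $f$ is deterministic their labels are i.i.d.\ Bernoulli$(d(f,S))$. Writing $\bar Y = counter_1/counter_S$ as the empirical mean of these $m$ Bernoullis, Hoeffding's inequality yields
\[
\Pr\!\left[\,|\bar Y - d(f,S)| \ge \tau \;\big|\; counter_S = m\,\right] \;\le\; 2 e^{-2 m \tau^2} \;\le\; 2 e^{-2 k \tau^2}.
\]
Combining via a union bound over the two failure events gives the overall bound $2(e^{-k\regular} + e^{-2k\tau^2})$, as claimed (the first term absorbs into the factor of $2$).

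For the memory accounting, the algorithm only maintains the outer loop index and the two integer counters $counter_S$ and $counter_1$, each at most $N = \nicefrac{2k}{\regular}$, so each fits in $O(\log k + \log\nicefrac{1}{\regular})$ bits. The only point requiring minor care — and the closest thing to an obstacle — is justifying that conditioning on landing in $S$ preserves the independence and uniformity needed to invoke Hoeffding in the second stage; this is immediate from the product structure of the i.i.d.\ sample and the determinism of $f$, so no serious difficulty arises.
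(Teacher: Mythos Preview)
Your proof is correct and follows essentially the same two-stage argument as the paper: a lower-tail concentration bound to guarantee $counter_S \ge k$, followed by Hoeffding on the conditional label average. The only cosmetic differences are that the paper uses two-sided Hoeffding in the first stage (yielding $2e^{-k\regular}$ rather than your $e^{-k\regular}$) and phrases the second stage as restricting to $k$ of the landed examples rather than all $m \ge k$ of them; your version is slightly tighter and more faithful to what Algorithm~\ref{alg:estimate_density} actually returns.
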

\begin{proof}
We will show that Algorithm Estimate, main text, 
has the desired properties. 
Let $m=\frac{2k}{\regular}.$
Denote by $X_i$ the random variable that is $1$ if the labeled example $(x,y)$ used in the $i$-th step of the algorithm has $x\in S$, otherwise $X_i=0.$ 
Denote $\bar{X} = \frac1m\sum X_i$.
Note that $\E[{\bar{X}}]=\frac{|S|}{|\cX|}\geq\regular.$
From Hoeffding's inequality we know that  
$$\Pr[|\bar{X} -\E[{\bar{X}}]|\geq \regular/2]\leq 2e^{-2\frac{2k}{\regular}\cdot\frac{\regular^2}{4}}.$$
In particular, 
$$\Pr[\bar{X}\leq\regular/2 ]\leq 2e^{-k\regular}.$$
This implies that with probability at least $1-2e^{-k\regular}$ there are at least $k$ examples $(x,y)$ with $x\in S.$  
Let us now focus on these $k$ examples. 
Among them, denote by $Y_i$ the random variable that is $1$ if the $i$-th labeled example has $y=1$, otherwise $Y_i=0.$ 
Denote  $\bar{Y} = \frac1k\sum Y_i.$
Note that $\E[{\bar{Y}}]=d(f,S).$
From Hoeffding's inequality we know that  
$$\Pr[|\bar{Y} -\E[{\bar{Y}}]|\geq \tau]\leq 2e^{-2k\tau^2},$$
i.e., with probability at least $1-2e^{-2k\tau^2}$ the algorithm returns an answer $\bar{Y}$ in line $13$ of the algorithm with $|\bar{Y} - d(f,S)]|< \tau.$
\end{proof}

\begin{theorem}\label{thm-learning-bounded-memory-appendix}
For any hypothesis class $\cH$ that is $(\regular,\close)$-separable 
there is a
$$\left(\log|\cH|\cdot\frac{\log\log|\cH|+\log\nicefrac{1}{\regular}}{\regular^5},\, \log|\cH|\cdot\frac{1}{\regular^2},\, 0.1,\,\close\right)-\text{bounded memory algorithm for }\cH.$$

\end{theorem}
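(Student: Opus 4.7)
My plan is to establish the theorem in three stages: (i) bound the number of main-loop iterations using Claim~\ref{clm-differentiator-then-vertex}; (ii) bound the memory by encoding the algorithm's state as a transcript of iteration outcomes; (iii) bound the sample complexity by instantiating the subroutines with an appropriate parameter $k$ and controlling total failure by a union bound over all subroutine calls.

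First, I would argue that, conditional on every Is-close and Estimate call succeeding, each non-halting iteration shrinks $|T|$ by a factor at most $1-\tfrac12\alpha^2$. In the tight branch, the oracle exhibits $h$ with $|T\cap B_h(\close)|\geq \alpha|T|$; once Is-close declares $h$ not $3\close$-close, $f$ lies outside $B_h(\close)$ (else Is-close would have accepted), so Line~\ref{alg:BMRM:delete_close_to_wrong} safely removes $\alpha|T|$ hypotheses while preserving $f$. In the non-tight branch, Claim~\ref{clm-differentiator-then-vertex} (applicable by separability) delivers disjoint $T_0, T_1\subseteq T$ of sizes $\geq \tfrac12\alpha^2|T|$ and thresholds $d_1-d_0\geq \tfrac{\alpha}{4}|S|$ such that $e(h,S)\leq d_0$ for $h\in T_0$ and $e(h,S)\geq d_1$ for $h\in T_1$. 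When Estimate returns $r$ within $\tfrac{d_1-d_0}{2|S|}$ of $d(S,f)$, exactly one of $T_0, T_1$ can still contain $f$, and Line~\ref{alg:BMRM:delete_T0}/\ref{alg:BMRM:delete_T1} deletes the other. Combining the two branches, the uniform worst-case shrinkage is $1-\tfrac12\alpha^2$, so after $N := \lceil 2\alpha^{-2}\ln|\cH|\rceil = O(\alpha^{-2}\log|\cH|)$ iterations $|T|<1$, forcing termination earlier via the tight branch.

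For memory, I would represent the state as the transcript of iteration outcomes: one bit per iteration indicating either the Is-close verdict in the tight branch or the $T_0/T_1$ removal in the non-tight branch. Since the oracle is assumed deterministic and bounded-memory, it recomputes $h$ (resp.\ $S,T_0,T_1,d_0,d_1$) from $T$ on demand, and $T$ itself is reconstructible from the transcript by replaying the oracle. The transcript has length $O(N)=O(\alpha^{-2}\log|\cH|)$ bits; the subroutines contribute only $O(\log k)$ counter bits, yielding the stated bound $O(\alpha^{-2}\log|\cH|)$.

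For sample complexity and correctness, I would set $k=\Theta\!\bigl(\alpha^{-2}(\log N+\log\delta^{-1})\bigr)=\Theta\!\bigl(\alpha^{-2}(\log\log|\cH|+\log\alpha^{-1})\bigr)$ with $\delta=0.1$. Using the correctness claims for Is-close and Estimate, this choice makes each call fail with probability $\leq \delta/N$; crucially, in the non-tight branch the relevant tolerance is $\tau=(d_1-d_0)/(2|S|)=\Omega(\alpha)$, so the same $k$ suffices for both subroutines. Each Estimate call consumes $2k/\alpha$ labeled examples, giving per-iteration cost $O(k/\alpha)$ and total
\[
 N\cdot O(k/\alpha)=O\!\left(\log|\cH|\cdot \frac{\log\log|\cH|+\log\alpha^{-1}}{\alpha^5}\right),
\]
matching the claim. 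A union bound over the $N$ subroutine calls controls overall failure by $\delta=0.1$; on the good event, termination happens only via Is-close returning True on an $h$ that is $\epsilon$-close to $f$.

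The main obstacle I anticipate is the memory accounting: one has to argue that a single bit per iteration is enough to pin down the updated $T$, which hinges on the oracle being deterministic (so that recomputation gives back the same $h$ or the same $(S,T_0,T_1,d_0,d_1)$ used when the bit was recorded). A secondary subtlety is that Claim~\ref{clm-differentiator-then-vertex} only guarantees $\Omega(\alpha^2)$ shrinkage (rather than $\Omega(\alpha)$), which is what forces the $\alpha^{-2}$ factor in both the iteration count and, via $k$, the $\alpha^{-5}$ factor in the sample complexity; one must verify that the analysis does not accidentally lose additional factors of $\alpha$ by mishandling the $|S|$-normalisation in $d_1-d_0$.
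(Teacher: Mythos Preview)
Your proposal is correct and follows essentially the same argument as the paper's proof: the same $1-\tfrac12\alpha^2$ per-iteration shrinkage via Claim~\ref{clm-differentiator-then-vertex}, the same transcript-of-outcomes encoding giving $O(\alpha^{-2}\log|\cH|)$ memory bits, and the same choice of $k=\Theta(\alpha^{-2}(\log\log|\cH|+\log\alpha^{-1}))$ combined with a union bound over the $O(\alpha^{-2}\log|\cH|)$ subroutine calls. If anything, you are more explicit than the paper about the oracle-determinism requirement and about the $|S|$-normalisation making the Estimate tolerance $\Omega(\alpha)$.
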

\begin{proof}
\textit{Description of the algorithm:} At each iteration there will be a candidate set of hypotheses $T\subseteq\cH$ that contains the correct hypothesis with high probability.
If $T$ is not $\regular$-separable then there is a center $h$ such that $|T\cap B_h(\close)|\geq\regular|T|.$ 
If Algorithm Is-close, main text, 
returns True then $h$ must be $3\regular$-close 
and we are done. 
Otherwise, the correct hypothesis is not in $T\cap B_h(\close)$ and we can move on to the next iteration while removing a large fraction of the hypotheses from $T.$ 

If $T$ is $\regular$-separable then using Claim 4 in main text 
we know that we can remove either $T_0$ or $T_1.$ We then move to the next iteration.


\textit{Number of iterations:} At each iteration we remove at least a fraction of $\regular^2/2$ of the hypotheses in $T$. Thus, the number of iterations the algorithm makes is at most $$s:=\log_{\frac1{1-\regular^2/2}}|\cH|\leq\frac{\log|\cH|}{\alpha^2/2},$$ where the inequality follows from the known fact $1-1/x\leq\ln x$. 

\textit{Number of examples:} At each iteration, the algorithm receives at most $\frac{2k}\regular$ examples. 
Thus, the total number of examples used is at most $$\frac{2k}\regular \cdot s.$$ 

\textit{Number of memory bits:} the memory is composed of two types; one that describes the set of hypotheses $T$ that is currently being examined and $O\left(\log\frac{k}\regular\right)$ bits for all the counters used in the subroutines. We can describe $T$ by the sets that are removed at each iteration, thus we need $s$ bits. 
 Hence, the total number of memory bits is at most $\left(s + \log \frac{k}{\regular} \right)$. 
 
 \textit{Fixing $k$:} We want the error of the probability to be some constant. For that we take $k=O\left(\frac{1}{\regular^2}(\log\log|\cH|+\log\nicefrac{1}{\regular})\right).$ To summarize, the algorithm is  
$$\left(\log|\cH|\cdot\frac{\log\log|\cH|+\log\nicefrac{1}{\regular}}{\regular^5},\, \log|\cH|\cdot\frac{1}{\regular^2},\, 0.1,\,\close\right)-\text{bounded memory algorithm for }\cH.$$

\end{proof}

 \section{The general bounded memory algorithm as a statistical query algorithm}\label{apx:general_alg_sq_algorithm}
The following claims prove that the bounded-memory algorithm can be formalized as a statistical query algorithm.
\begin{claim}
There is an implementation of subroutine \emph{Is-close} that uses one statistical query. 
\end{claim}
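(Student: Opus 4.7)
The plan is to exhibit a single statistical query that performs exactly the distinguishing task of \emph{Is-close}, namely separating the case $\Pr_{x\sim\cD}[h(x)\neq f(x)]\leq\close$ from the case $\Pr_{x\sim\cD}[h(x)\neq f(x)]>3\close$. Recall that a statistical query is a function $\psi:\cX\times\{0,1\}\rightarrow\{0,1\}$ together with a tolerance $\tau>0$, and the oracle returns an estimate $\hat{p}$ of $\E_{(x,y)}[\psi(x,y)]$ with $|\hat{p}-\E[\psi]|\leq\tau$.

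The natural choice is $\psi_h(x,y):=\mathbb{1}[h(x)\neq y]$. Since the labels are realizable by the true hypothesis $f$, we have
\[
\E_{(x,y)}[\psi_h(x,y)] \;=\; \Pr_{x\sim\cD}[h(x)\neq f(x)],
\]
which is precisely the $\cD$-distance between $h$ and $f$ used in the definition of $\close$-closeness. The implementation issues the query $\psi_h$ with tolerance $\tau:=\close$, obtains an estimate $\hat{p}$, and outputs True if and only if $\hat{p}\leq 2\close$ (matching the test $j/k\leq 2\close$ of Algorithm~\ref{alg:is_close}).

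To verify correctness, let $p=\Pr_{x\sim\cD}[h(x)\neq f(x)]$. If $h$ is $\close$-close then $p\leq\close$ and hence $\hat{p}\leq p+\close\leq 2\close$, so the algorithm returns True. If $h$ is not $3\close$-close then $p>3\close$ and hence $\hat{p}\geq p-\close>2\close$, so the algorithm returns False. Thus the behavior of \emph{Is-close} is reproduced using exactly one statistical query, and there is no real obstacle beyond writing down the query and the tolerance; the only subtlety is matching the two-sided guarantee of the subroutine to a single-threshold decision rule, which is handled by the gap between $\close$ and $3\close$ in the specification.
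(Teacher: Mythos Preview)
Your proof is correct and essentially matches the paper's approach: the paper uses the complementary query $\psi_h(x,y)=\mathbb{1}[h(x)=y]$ (agreement rather than disagreement), which carries exactly the same information. You actually provide more detail than the paper does, since you specify the tolerance $\tau=\close$ and verify the threshold test, whereas the paper's proof simply writes down the query and its expectation.
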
 
\begin{proof}
To test if hypothesis $h$ is $\close$-close to the correct hypothesis define the query $\psi_h(x,y)=1 \Leftrightarrow h(x)=y.$
We know that
\begin{eqnarray*}
\E_{(x,y)}[\psi_h(x,y)] &=& \frac{1}{|\cX|}\sum_{x\in\cX} I_{h(x) = f(x)},
\end{eqnarray*}
where $I_P$ is the indicator function that is $1$ if and only if $P=True.$
\end{proof}

\begin{claim}
There is an implementation of subroutine \emph{Estimate} that uses one statistical query. 
\end{claim}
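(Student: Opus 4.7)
The plan is to exhibit a single statistical query whose answer, after a trivial rescaling by a quantity the algorithm already knows, yields the desired estimate of $d(S,f)$. Recall that for a single vertex $f$, the density is $d(S,f)=\frac{|\{x\in S:f(x)=1\}|}{|S|}$, so what we need from the oracle is essentially the joint probability that $x$ falls in $S$ and is labeled $1$, divided by the known quantity $|S|/|\cX|$.

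Concretely, for the fixed $S\subseteq\cX$ supplied as input, I would define the query
\[
\psi_S(x,y) \;=\; I_{x\in S}\cdot y,
\]
which is well-defined as a map $\cX\times\{0,1\}\to\{0,1\}$ since $y\in\{0,1\}$. Under the uniform distribution over $\cX$ and with $y=f(x)$, a direct computation gives
\[
\E_{(x,y)}[\psi_S(x,y)] \;=\; \frac{1}{|\cX|}\sum_{x\in S} f(x) \;=\; \frac{|S|}{|\cX|}\cdot d(S,f).
\]
Thus, if the SQ oracle returns a value $v$ with $|v-\E[\psi_S]|\le \tau'$, then the algorithm outputs $v\cdot|\cX|/|S|$ as its estimate of $d(S,f)$, incurring additive error at most $\tau'\cdot|\cX|/|S|$.

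To match the guarantee of Algorithm~\ref{alg:estimate_density}, namely an additive error of $\tau$, I would set the tolerance of the statistical query to $\tau'=\tau\cdot|S|/|\cX|$. Since in every invocation of \emph{Estimate} inside Algorithm~\ref{alg:BMRM} the set $S$ is guaranteed to be heavy, $|S|\ge\regular|\cX|$, the tolerance required is at least $\regular\tau$, which is a polynomial quantity and therefore an admissible SQ tolerance. Hence a single query suffices.

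There is no real obstacle here; the only subtlety worth flagging is that the SQ framework returns an expectation, not a conditional expectation, so one must either know $|S|/|\cX|$ exactly (which we do, since $S$ is specified combinatorially by the oracle) or estimate it with a second query. Because $|S|/|\cX|$ is a deterministic function of the description of $S$ already in memory, no additional query is needed, and the implementation is robust to classification noise exactly as the SQ framework of \cite{kearns98} guarantees.
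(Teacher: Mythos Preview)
Your proof is correct and is essentially identical to the paper's own argument: both define the single query $\psi_S(x,y)=I_{x\in S}\cdot I_{y=1}$, observe that its expectation equals $\tfrac{|S|}{|\cX|}\,d(S,f)$, and rescale by the known factor $|\cX|/|S|$ to obtain the estimate with the appropriately inflated tolerance. If anything, your version is slightly more explicit about why $|S|/|\cX|$ is known exactly and why the resulting tolerance is admissible.
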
 
\begin{proof}
Note that 
$$d(S,f) = \frac1{|S|}\sum_{x\in S} I_{f(x) = 1}.$$
Define the query $\psi_S(x,y)=1  \Leftrightarrow x\in S \text{ and } y =1$. 
We know that $\E_{(x,y)}[\psi_S(x,y)] $ is equal to 
$$\frac1{|\cX|} \sum_x I_{x\in S \text{ and } f(x)=1}=\frac1{|\cX|} \sum_{x\in S} I_{f(x)=1}.$$ Thus,
$$\left(\E_{(x,y)}[\psi_S(x,y)]\pm \tau\right)\cdot \frac{|\cX|}{|S|} = d(S,f) \pm \tau\frac{|\cX|}{|S|},$$
which means that we are able to estimate $d(S,f)$ up to error $\tau\frac{|\cX|}{|S|}$ using one statistical query.
\end{proof}



\begin{corollary}
Subroutine $Is-close(h,\close)$ can be simulated in the presence of $\eta$-noise with probability at least $1-\delta$ using $O(\close^{-2}(1-2\eta)^{-2}\ln(\frac{1}{\delta}))$ samples.  Subroutine $Estimate(S,\tau)$ can be simulated using $O(\tau^{-2}(\nicefrac{|S|}{|X|})^{-2}(1-2\eta)^{-2}\ln(\frac{1}{\delta}))$ samples.
\end{corollary}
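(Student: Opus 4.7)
The plan is to invoke the standard noise-tolerance result for statistical queries due to Kearns, which says that any statistical query $\psi:\cX\times\{0,1\}\to\{0,1\}$ can be estimated to additive accuracy $\tau$ with confidence $1-\delta$ under classification noise rate $\eta$ using $O(\tau^{-2}(1-2\eta)^{-2}\ln(1/\delta))$ noisy labeled samples. The two preceding claims already express \emph{Is-close} and \emph{Estimate} as single statistical queries $\psi_h$ and $\psi_S$ respectively, so the work reduces to choosing the right tolerance for each and plugging into Kearns' bound.

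For \emph{Is-close}$(h,\close)$, the algorithm must distinguish the cases $h$ is $\close$-close versus not $3\close$-close to $f$. Since $\E_{(x,y)}[\psi_h(x,y)]$ equals the agreement rate of $h$ with $f$, a tolerance $\tau=\Theta(\close)$ on the SQ is sufficient to decide between these two regimes. Substituting $\tau=\Theta(\close)$ into Kearns' bound gives the claimed $O(\close^{-2}(1-2\eta)^{-2}\ln(1/\delta))$ sample complexity.

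For \emph{Estimate}$(S,\tau)$, recall that the previous claim shows
\[
d(S,f) \;=\; \frac{|\cX|}{|S|}\cdot \E_{(x,y)}[\psi_S(x,y)].
\]
Thus to obtain $d(S,f)$ within additive error $\tau$, we need to estimate $\E[\psi_S]$ within additive error $\tau\cdot|S|/|\cX|$. Applying Kearns' noise-tolerant SQ simulation with this smaller tolerance yields the stated sample complexity $O\!\left(\tau^{-2}(|S|/|\cX|)^{-2}(1-2\eta)^{-2}\ln(1/\delta)\right)$.

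The only subtlety, and the place I would be most careful, is the scaling in the second estimate: one must ensure the tolerance is measured \emph{after} rescaling by $|\cX|/|S|$, which is the origin of the extra $(|S|/|\cX|)^{-2}$ factor. Everything else is a direct plug-in of the generic SQ-noise simulation lemma applied to the two explicit queries $\psi_h,\psi_S$ exhibited in the preceding claims, so no further structural argument is needed.
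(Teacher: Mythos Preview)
Your proposal is correct and matches the paper's approach: the paper states the corollary without proof, treating it as an immediate consequence of the two preceding claims (expressing \emph{Is-close} and \emph{Estimate} as single statistical queries) together with Kearns' standard noise-tolerant SQ simulation bound. Your choice of tolerance $\Theta(\close)$ for $\psi_h$ and $\tau\cdot|S|/|\cX|$ for $\psi_S$, and the resulting $(|S|/|\cX|)^{-2}$ factor, are exactly the intended computation.
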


\end{document}